\documentclass[]{article}

\usepackage[preprint]{neurips_2026}
\workshoptitle{}

\usepackage[]{authblk}
\usepackage{courier}
\usepackage{fix-cm}
\usepackage{hyperref}
\usepackage{xurl}

\usepackage{algorithm}
\usepackage{algpseudocode}

\usepackage{amsmath}
\usepackage{amsthm}
\usepackage{stix}
\newtheorem{theorem}{Theorem}

\usepackage{booktabs}
\newcommand{\lessismore}[1]{\makebox[2.75em][l]{\hspace{0.10em}\makebox[0.55em][c]{{\color{teal}\tiny$\downarrow$}}\makebox[2.00em][r]{{\color{teal}\tiny$#1\%$}}}}
\newcommand{\moreisless}[1]{\makebox[2.75em][l]{\hspace{0.10em}\makebox[0.50em][c]{{\color{purple}\tiny$\uparrow$}}\makebox[2.00em][r]{{\color{purple}\tiny$#1\%$}}}}

\usepackage{tikz}
\usetikzlibrary{arrows.meta}

\title{%
	Linear-Time Non-Local Codec Placement \\
	for Machine Learning Model Compression
}
\author{%
    Matteo Spallanzani \\
    Axelera AI \\
    Zürich, Switzerland \\
    \texttt{matteo.spallanzani@axelera.ai} \\
}

\begin{document}

\maketitle

\begin{abstract}
	On-device inference imposes constraints on model storage and operand traffic and thus benefits from compressed operand representations, which mandate the use of encoding and decoding operators.
	However, na\"{\i}ve placement of codec operators can cancel the operand traffic gains sought from compression.
	In this work, I present an algorithm to place codec operators into a source computational graph to make it compatible with target hardware.
	The edges of the graph are colored so that each induced subgraph is weakly connected and has a maximal node that both postdominates all subgraph inputs and dominates all subgraph outputs; this unique node \textemdash dubbed the \textit{absolute dominator} \textemdash is the natural location to observe the traffic flowing through the subgraph and learn parametric codec operators.
	The algorithm's time complexity is linear in the source graph's size.
	With respect to na\"{\i}ve local codec placement, the algorithm reduces modeled operand traffic by $24\%$--$44\%$ across four models, including $24\%$ on SigLIP and $34\%$ on SmolLM2.
\end{abstract}

\section{Rationale}
\label{section:rationale}

Machine learning (ML) model compression increases efficiency by representing operands (weights and features) with low-bitwidth data types.
Compressing source models developed for general-purpose hardware for deployment on target hardware featuring one or more accelerators requires using data-type-changing operators.
An example is the quantization of GPU-native models \textemdash where operands are represented as 32- or 16-bit floating-point (FP) numbers \textemdash into low-bitwidth data types \textemdash such as 8- or 4-bit integers (INT).

The data-type-changing operators are usually lossy encoder-decoder (\textbf{codec}) operators.
In regular programs, these new operators would break correctness.
ML models are different in that they can use parametric lossy codecs and statistical algorithms to mitigate application accuracy degradations.
These algorithms include distribution transformations~\citep{2023Xiao, 2024Ashkboos}, post-training quantization (PTQ)~\citep{2021Nagel}, and quantization-aware training (QAT)~\citep{2021Nagel}.
This codec abstraction encompasses the established case of element-wise linear quantization~\citep{2013Bengio, 2020Esser}, element-wise non-linear quantization~\citep{2017Zhou}, block-wise quantization~\citep{2023Rouhani}, and vector quantization~\citep{2026Zandieh}.

Codec placement can range in complexity from local to non-local.
The difference between local and non-local placement is best illustrated by an example.
Consider a two-branch residual block with a matrix-matrix multiplication on each branch: a local rule quantizes the weight and feature inputs of each operator independently; a non-local rule quantizes the shared block input only once while still quantizing the weights independently.
This example shows that to achieve operand traffic savings one must decide not only which operators use compressed representations, but also where encodings and decodings occur across operator boundaries.
The decision whether to perform local or non-local placement informs the choice of the underlying representation:
\begin{itemize}
	\item \textit{ahead-of-time (AoT)} representations \textemdash also referred to as \textit{graph} representations \textemdash arrange computations into a static computational graph before they are executed;
	\item \textit{just-in-time (JiT)} representations \textemdash also known as \textit{eager} representations \textemdash track computations into a dynamic computational graph while they are executed.
\end{itemize}
Non-local placement requires that eager representations be traced into graph representations.

Existing ML model compression frameworks~\citep{2019Roesch, 2019Xilinx, 2021Lattner, 2025Or, 2026NVIDIA, 2026Nachin} implement a variety of placement strategies.
Na\"{\i}ve strategies use local rules to achieve local placement.
More advanced strategies \textemdash such as TensorRT's Q/DQ propagation \textemdash use local commutation rules to grow low-precision subgraphs after learning; these strategies can reconcile codecs across operator boundaries only when they have equal parameters, and thus rely on codec parameter sharing specifications (i.e., non-local rules) defined ahead of learning to prevent learning-time parameter divergence.
Other advanced strategies use hardware-specific subgraph patterns (i.e., non-local rules) to achieve non-local placement.

This work presents an algorithm that 1) derives non-local codec placement ahead of learning from per-operator local commutation rules, 2) guarantees structural correctness, and 3) has linear-time complexity.
The non-local rules in codec parameter sharing specifications or pattern catalogs specify codec intent by context, which might depend on both the model to be compressed and target hardware.
In contrast, the local commutation rules consumed by this algorithm specify codec intent per operator \textemdash for each operator, a commutation rule declares which of its ports a codec can commute through \textemdash and they can be reused across every context in which an operator appears.
The local commutation rules are specific to the codec class and parametrization; this constraint is shared by existing non-local rules.
The linear-time complexity makes this algorithm particularly suited to deployment toolchains, where graphs might be adapted repeatedly under different compression policies.

\section{The algorithm}
\label{section:algorithm}

\begin{figure}[t]
\centering
\resizebox{\textwidth}{!}{%
	\begin{tabular}{@{}c@{\hspace{2em}}c@{\hspace{2em}}c@{\hspace{2em}}c@{}}
		\begin{tikzpicture}
    \node[draw=none, node font=\LARGE] at (2,-1) {(a)};
    \draw[arrows=-Latex] (-1,7) -- (-1,5.5);
    \draw[arrows=-Latex] (3,7) -- (3,5.5);
    \draw[arrows=-Latex] (-1,4.5) -- (0.65,2.35);
    \draw[arrows=-Latex] (3,4.5) -- (1.35,2.35);
    \draw[arrows=-Latex] (1,1.5) -- (1,0);
    \draw[arrows=-Latex] (3,4.5) -- (4.65,2.35);
    \draw[arrows=-Latex] (5,1.5) -- (5,0);
    \draw[fill=white] (-1,5) circle (0.5cm);
    \node[draw=none] at (-1,5) {R};
    \draw[fill=white] (3,5) circle (0.5cm);
    \node[draw=none] at (3,5) {R};
    \draw[fill=white] (1,2) circle (0.5cm);
    \node[draw=none] at (1,2) {C};
    \draw[fill=white] (5,2) circle (0.5cm);
    \node[draw=none] at (5,2) {R};
\end{tikzpicture}  &
		\begin{tikzpicture}
    \node[draw=none, node font=\LARGE] at (2,-1) {(b)};
    \draw[arrows=-Latex] (-1,7) -- (-1,6.25);
    \draw[arrows=-Latex] (-1,5.75) -- (-1,5.5);
    \draw[arrows=-Latex] (-1,4.5) -- (-1,4.25);
    \draw[arrows=-Latex] (3,7) -- (3,6.25);
    \draw[arrows=-Latex] (3,5.75) -- (3,5.5);
    \draw[arrows=-Latex] (3,4.5) -- (3,4.25);
    \draw[arrows=-Latex] (-0.85,3.85) -- (-0.15,3.15);
    \draw[arrows=-Latex] (0.15,2.85) -- (0.65,2.35);
    \draw[arrows=-Latex] (2.85,3.85) -- (2.15,3.15);
    \draw[arrows=-Latex] (1.85,2.85) -- (1.35,2.35);
    \draw[arrows=-Latex] (1,1.5) -- (1,1.25);
    \draw[arrows=-Latex] (1,0.75) -- (1,0);
    \draw[arrows=-Latex] (3.15,3.85) -- (3.85,3.15);
    \draw[arrows=-Latex] (4.15,2.85) -- (4.65,2.35);
    \draw[arrows=-Latex] (5,1.5) -- (5,1.25);
    \draw[arrows=-Latex] (5,0.75) -- (5,0);
    \draw[fill=black!16] (-1,6) circle (0.25cm);
    \draw[fill=white] (-1,5) circle (0.5cm);
    \node[draw=none] at (-1,5) {R};
    \draw[fill=black!40] (-1,4) circle (0.25cm);
    \draw[fill=black!16] (3,6) circle (0.25cm);
    \draw[fill=white] (3,5) circle (0.5cm);
    \node[draw=none] at (3,5) {R};
    \draw[fill=black!40] (3,4) circle (0.25cm);
    \draw[fill=black!16] (0,3) circle (0.25cm);
    \draw[fill=black!16] (2,3) circle (0.25cm);
    \draw[fill=white] (1,2) circle (0.5cm);
    \node[draw=none] at (1,2) {C};
    \draw[fill=black!40] (1,1) circle (0.25cm);
    \draw[fill=black!16] (4,3) circle (0.25cm);
    \draw[fill=white] (5,2) circle (0.5cm);
    \node[draw=none] at (5,2) {R};
    \draw[fill=black!40] (5,1) circle (0.25cm);
\end{tikzpicture}  &
		\begin{tikzpicture}
    \node[draw=none, node font=\LARGE] at (2,-1) {(c)};
    \draw[arrows=-Latex, ultra thick, dashed] (-1,7) -- (-1,6.25);
    \draw[arrows=-Latex, red, ultra thick, densely dashed] (-1,5.75) -- (-1,5.5);
    \draw[arrows=-Latex, red, ultra thick, densely dashed] (-1,4.5) -- (-1,4.25);
    \draw[arrows=-Latex, ultra thick, dotted, black!80] (3,7) -- (3,6.25);
    \draw[arrows=-Latex, orange, ultra thick, densely dotted] (3,5.75) -- (3,5.5);
    \draw[arrows=-Latex, orange, ultra thick, densely dotted] (3,4.5) -- (3,4.25);
    \draw[arrows=-Latex, red, ultra thick, densely dashed] (-0.85,3.85) -- (-0.15,3.15);
    \draw[arrows=-Latex, purple, ultra thick] (0.15,2.85) -- (0.65,2.35);
    \draw[arrows=-Latex, orange, ultra thick, densely dotted] (2.85,3.85) -- (2.15,3.15);
    \draw[arrows=-Latex, purple, ultra thick] (1.85,2.85) -- (1.35,2.35);
    \draw[arrows=-Latex, purple, ultra thick] (1,1.5) -- (1,1.25);
    \draw[arrows=-Latex, purple, ultra thick] (1,0.75) -- (1,0);
    \draw[arrows=-Latex, orange, ultra thick, densely dotted] (3.15,3.85) -- (3.85,3.15);
    \draw[arrows=-Latex, ultra thick, teal, loosely dashed] (4.15,2.85) -- (4.65,2.35);
    \draw[arrows=-Latex, ultra thick, teal, loosely dashed] (5,1.5) -- (5,1.25);
    \draw[arrows=-Latex, ultra thick, teal, loosely dashed] (5,0.75) -- (5,0);
    \draw[fill=black!16] (-1,6) circle (0.25cm);
    \draw[fill=white] (-1,5) circle (0.5cm);
    \node[draw=none] at (-1,5) {R};
    \draw[fill=black!40] (-1,4) circle (0.25cm);
    \draw[fill=black!16] (3,6) circle (0.25cm);
    \draw[fill=white] (3,5) circle (0.5cm);
    \node[draw=none] at (3,5) {R};
    \draw[fill=orange!80] (3,4) circle (0.25cm);
    \draw[fill=red!80] (0,3) circle (0.25cm);
    \draw[fill=black!16] (2,3) circle (0.25cm);
    \draw[fill=white] (1,2) circle (0.5cm);
    \node[draw=none] at (1,2) {C};
    \draw[fill=purple!80] (1,1) circle (0.25cm);
    \draw[fill=black!16] (4,3) circle (0.25cm);
    \draw[fill=white] (5,2) circle (0.5cm);
    \node[draw=none] at (5,2) {R};
    \draw[fill=teal!80] (5,1) circle (0.25cm);
\end{tikzpicture}   &
		\begin{tikzpicture}
    \node[draw=none, node font=\LARGE] at (2,-1) {(d)};
    \draw[arrows=-Latex, very thick, dashed] (-1,7) -- (-1,6.25);
    \draw[arrows=-Latex, ultra thick, purple] (-1,5.75) -- (-1,5.5);
    \draw[arrows=-Latex, ultra thick, purple] (-1,4.5) -- (-1,4.25);
    \draw[arrows=-Latex, very thick, dotted, black!80] (3,7) -- (3,6.25);
    \draw[arrows=-Latex, orange, very thick, densely dotted] (3,5.75) -- (3,5.5);
    \draw[arrows=-Latex, orange, very thick, densely dotted] (3,4.5) -- (3,4.25);
    \draw[arrows=-Latex, ultra thick, purple] (-0.85,3.85) -- (-0.15,3.15);
    \draw[arrows=-Latex, purple, ultra thick] (0.15,2.85) -- (0.65,2.35);
    \draw[arrows=-Latex, orange, very thick, densely dotted] (2.85,3.85) -- (2.15,3.15);
    \draw[arrows=-Latex, purple, ultra thick] (1.85,2.85) -- (1.35,2.35);
    \draw[arrows=-Latex, purple, ultra thick] (1,1.5) -- (1,1.25);
    \draw[arrows=-Latex, purple, ultra thick] (1,0.75) -- (1,0);
    \draw[arrows=-Latex, orange, very thick, densely dotted] (3.15,3.85) -- (3.85,3.15);
    \draw[arrows=-Latex, teal, loosely dashed, very thick] (4.15,2.85) -- (4.65,2.35);
    \draw[arrows=-Latex, teal, loosely dashed, very thick] (5,1.5) -- (5,1.25);
    \draw[arrows=-Latex, teal, loosely dashed, very thick] (5,0.75) -- (5,0);
    \draw[fill=black!16] (-1,6) circle (0.25cm);
    \draw[fill=white] (-1,5) circle (0.5cm);
    \node[draw=none] at (-1,5) {R};
    \draw[fill=black!40, fill opacity=1] (-1,4) circle (0.25cm);
    \draw[fill=black!16] (3,6) circle (0.25cm);
    \draw[fill=white] (3,5) circle (0.5cm);
    \node[draw=none] at (3,5) {R};
    \draw[fill=orange!80] (3,4) circle (0.25cm);
    \draw[fill=black!16, fill opacity=1] (0,3) circle (0.25cm);
    \draw[fill=black!16] (2,3) circle (0.25cm);
    \draw[fill=white] (1,2) circle (0.5cm);
    \node[draw=none] at (1,2) {C};
    \draw[fill=purple!80] (1,1) circle (0.25cm);
    \draw[fill=black!16] (4,3) circle (0.25cm);
    \draw[fill=white] (5,2) circle (0.5cm);
    \node[draw=none] at (5,2) {R};
    \draw[fill=teal!80] (5,1) circle (0.25cm);
\end{tikzpicture}
	\end{tabular}
}
\caption{%
	(a) Detail of a source computational graph: two upstream reshape operators (R) feed a concatenate operator (C), and the rightmost one feeds a downstream reshape operator (R).
	(b) The algorithm's first step inserts input codec markers (light gray) and output codec markers (dark gray) around each operator.
	(c) The second step assigns each edge a semantics, partitioning the graph into weakly connected subgraphs (edges sharing a color and dash pattern); the third step then identifies each subgraph's absolute dominator (colored node).
	(d) The fourth step merges the leftmost upstream reshape subgraph into the concatenate subgraph; the rightmost upstream reshape subgraph fans out to two subgraphs, and thus cannot be merged without breaking the dominance property.
}
\label{fig:adapt}
\end{figure}

\begin{minipage}{0.60\textwidth}
Algorithm~\ref{algorithm:adapt} decomposes into the steps outlined in Figure~\ref{fig:adapt}:
\begin{enumerate}
    \item[1.] insert codec markers;
    \item[2.] assign semantics to edges;
    \item[3.] induce subgraphs and find absolute dominators;
    \item[4.] coalesce adjacent subgraphs when semantics and absolute dominance can be preserved.
\end{enumerate}
\end{minipage}
\hfill
\begin{minipage}{0.36\textwidth}
\vspace{-\baselineskip}
\begin{algorithm}[H]
\caption{$G \gets adapt(G, \mathfrak{C})$}
\label{algorithm:adapt}
\begin{algorithmic}
	\State $G \gets insert\_markers(G)$
	\State $G \gets assign\_semantics(G, \mathfrak{C})$
	\State $G \gets induce\_subgraphs(G)$
	\State $G \gets coalesce\_subgraphs(G)$
	\\
	\Return $G$
\end{algorithmic}
\end{algorithm}
\end{minipage}

The first step is the only step allowed to make topological changes to the \textbf{graph} $G = (V, E)$ (insert nodes, insert edges, remove edges).
This step preserves the graph's acyclicity \textemdash which guarantees that a topological ordering exists \textemdash and its asymptotic size \textemdash which means that the time complexity of subsequent steps can still be expressed in terms of $G$'s size.

The intuition of codec markers is the same as that of local rules for codec placement.
Consider an operator $f$ whose \textit{source schema} and \textit{target schema} declare different data types, forcing codec placement at its input and output:
\[
	f
	\approx (h^{-1} \circ h) \circ f \circ (g^{-1} \circ g)
	= h^{-1} \circ (h \circ f \circ g^{-1}) \circ g
	= h^{-1} f' g \,.
\]
In plain language:
\begin{itemize}
	\item in the \textit{source view}, the original operator $f$ is approximated by a sequence of a codec operator ($g^{-1} \circ g$), $f$ itself, and another codec operator ($h^{-1} \circ h$);
	\item in the \textit{target view}, the approximation is replaced by the equivalent\footnote{Modulo the algebraic differences between real and digital arithmetic.} sequence of an encoding operator $g$, the operator $f'$ acting on encoded values, and a decoding operator $h^{-1}$.
\end{itemize}

The second step traverses the graph in topological order and paints each edge with a single color, in such a way that same-color edges induce subgraphs that 1) are weakly connected and 2) have a non-empty set of nodes lying on each entry-exit path \textemdash the \textit{absolute dominator candidates}.
Nodes can be transparent to one or more of their input colors, meaning one or more subsets of in- and out-edges must share the same color.
During traversal, each node must enforce color consistency on its transparent sets, potentially recoloring in-edges.
A node's transparent sets are computed in time proportional to the node's degree from a \textbf{commutation table} $\mathfrak{C}$.

The intuition of transparency is that of commutativity with codec operators:
\[
	f \circ (g^{-1} \circ g) = (g^{-1} \circ g) \circ f \,.
\]
This property naturally holds when codecs are lossless (i.e., $g^{-1} \circ g = id$).
However, to commute, $g$ does not need to be the identity; for instance, an element-wise data movement operator $f$ (e.g., a reshape) commutes with an element-wise codec operator $(g^{-1} \circ g)$ (e.g., a fake-quantization).

Note that if recolored in-edges were out-edges of non-markers, allowing a transparent node to recolor its in-edges would open the door to a number of recolorings that can be quadratic in $|E|$ since ancestors might themselves be transparent.
Two examples should suffice as illustrations: a chain of reshape operators and a tree of concatenate operators.
To bound this hazard, the only out-edges allowed to be recolored are the out-edges of input codec markers; the first step guarantees that non-markers are always preceded by input codec markers.

The third step creates the subgraphs spanned by same-color edges and identifies their absolute dominators.
To identify a subgraph's absolute dominator, the step traverses its nodes in topological order while keeping track of each path's last edges in a frontier; whenever a node is traversed whose set of in-edges is a superset of the frontier, that node is added to a set of candidates; the last element of this set is the \textbf{absolute dominator}.
The search does not need a dominator tree.

Same-color subgraphs are best interpreted as subprograms in which, if a codec were materialized on each edge, the function realized by the subgraph would be preserved due to codec idempotence:
\[
	(g^{-1} \circ g) \circ (g^{-1} \circ g)
	= g^{-1} \circ (g \circ g^{-1}) \circ g
	= g^{-1} \circ g \,.
\]
The algorithm thus assumes codecs that satisfy $g \circ g^{-1} = id$ on the encoded space.
To understand the intuition of an absolute dominator, consider a subgraph and the bundle of all its entry-exit paths.
Assume that each path in the bundle passes through a non-empty set of shared ``bottleneck'' nodes: sitting at any such node allows one to observe all the traffic passing through every input and output.
The last node is preferred because the data layout and distribution at that point in the subgraph are closer to those expected by consumer operators.

The fourth step merges adjacent subgraphs when they have compatible colors and the merge preserves the absolute dominance property.

\begin{theorem}
\label{theorem:adapt_correctness}
Algorithm~\ref{algorithm:adapt} is correct.
\begin{itemize}
	\item $G$ is edge-partitioned into same-semantics subgraphs.
	\item Each subgraph 1) is weakly connected, and 2) has an absolute dominator.
\end{itemize}
\end{theorem}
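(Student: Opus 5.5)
The proof establishes invariants step by step: each of the four steps of Algorithm~\ref{algorithm:adapt} either creates the two properties in the statement or preserves them. The edge-partition claim is the easy part. After $insert\_markers$, the second step visits every edge and gives it exactly one color. Later recolorings replace a color rather than adding one, so at the end of $assign\_semantics$ the coloring is a total function $E \to \text{colors}$. Its fibers partition $E$, and $induce\_subgraphs$ only reads these fibers. So $G$ is edge-partitioned into same-semantics subgraphs after step 3. For step 4, I will note that $coalesce\_subgraphs$ only takes unions of whole classes. A union of blocks of a partition is again a partition, and the merge condition requires compatible colors, so the semantics are still well defined.

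Weak connectivity and the existence of an absolute dominator I would prove together, by induction over the topological traversal of step 2. The invariant is that every color class painted so far is weakly connected and has a non-empty set of candidates, meaning nodes lying on every entry-exit path of the class. The base case is the structure that step 1 creates. Each non-marker $f$ sits between input markers and output markers, so an untouched color class looks like an edge into a marker and an edge out of it. Such a class is a path, hence connected, and the marker itself is its candidate.

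The inductive step is a non-marker node $v$ that is transparent on a set $T$ of its in- and out-edges. Here $v$ recolors the out-edges of the input markers in $T$ so that all of $T$ has one color. The merged class is a union of classes that all touch $v$, so it is weakly connected. Every entry-exit path of the merged class either passes through $v$, or stays inside one of the old classes, which have a single in-edge through an input marker. So $v$ is a candidate, and the candidate set stays non-empty.

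The restriction that only input-marker out-edges can be recolored is what keeps this argument local. Recoloring never changes edges upstream of a non-marker, so the invariant of classes that are already closed is not disturbed.

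Next, I show that the frontier search in step 3 returns a true absolute dominator. The claim is that, in topological order, a node $u$ lies on all entry-exit paths exactly when its in-edges cover the current frontier of last edges. If the frontier were not covered, some path would bypass $u$; if it is covered, every path must go through $u$. Taking the last candidate then gives the maximal node that postdominates all inputs and dominates all outputs.

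Finally, for step 4 I rely on the guard on $coalesce\_subgraphs$: a merge happens only if the union keeps an absolute dominator. Figure~\ref{fig:adapt}(d) shows this, where a fan-out class is refused. Two subgraphs that are adjacent share a node, so their union is connected. Each merge therefore preserves both invariants, and induction over the sequence of merges finishes the proof.

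The main obstacle is the inductive step of step 2. A transparent node may take part in several transparent sets, and it may recolor classes that were joined earlier. I must show that the merged class does not pick up an entry-exit path that avoids every common node. I would first handle this by arguing that each recolored class enters only through a single input-marker edge, which forces paths through $v$. If that fails for multi-set transparency, I will fall back on the step 4 guard, together with the fact that step 3 can detect an empty candidate set, and argue that step 2 never produces one.
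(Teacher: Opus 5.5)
Your edge-partition argument and the shared-node connectivity of merges match the paper. The gaps are in the two dominator invariants.

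\textbf{Step 2: recoloring does not merge classes.} Your model of step 2 is not what the algorithm does, and the inductive step built on it fails. When a port block's in-edges carry more than one color, $assign\_semantics\_non\_marker$ paints the block with $\mathbf{S}[0]$, a fresh color carried by no previously visited edge. The recolored input-marker out-edges \emph{leave} their old classes, and the new class is exactly the star of the block's edges around $v$.

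Under your union model the invariant would actually break. Suppose an output marker of class $c_1$ fans out both to $v$ and to another consumer $w$, while class $c_2$ reaches $v$ through a different input marker. Then the union contains an entry-exit path ending at $w$ and one running through $c_2$ into $v$, and these share no node. Your sentence ``either passes through $v$, or stays inside one of the old classes \dots\ So $v$ is a candidate'' is a non sequitur for exactly this reason. The fallback is circular: step 4 runs after step 2 and cannot repair its output, and ``argue that step 2 never produces one'' is the very claim at stake.

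What you need is the paper's case split over the blocks at each visited $v$:
\begin{itemize}
\item[(i)] Blocks that receive a fresh color form a star around $v$, so they are weakly connected with $v$ on every entry-exit path. These are blocks with no in-edges (e.g.\ the out-edge of an incompatible input marker) and blocks whose in-edges are recolored. The old classes \emph{are} disturbed, since they lose the recolored edges. But those edges were terminal ($v$ had no colored out-edges before its visit), so the old classes stay connected.
\item[(ii)] The propagation case, which you never treat, is how classes actually grow. Output markers, compatible input markers, and blocks whose in-edges share one color copy that color onto $v$'s out-edges. Connectivity holds because the new edges hang off $v$, which is already in the class. The existing candidate lies on every one-edge extension of the paths ending at $v$.
\end{itemize}
Your base case (``an edge into a marker and an edge out of it'', a path) also misdescribes the classes, since output markers fan out and colors propagate through transparent nodes.

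\textbf{Step 4: you assume the conclusion.} The guard of $coalesce\_subgraphs$ is not ``the union keeps an absolute dominator.'' It consists of three conditions: compatibility of semantics, $|B| = |S.exits|$ (equivalently $B = S.exits$, since $B \subseteq S.exits$), and uniqueness of such a target $T$. Dominance must be derived from this structural condition, which is the content of the paper's merge argument:
\begin{itemize}
\item Every node of $S$ lies on an $S$-path ending at a boundary marker, which is an entry of $T$, so it is an ancestor of $v^{*}_{T}$.
\item Every entry-exit path of $S \cup T$ is either a $T$-path or reaches a shared node and continues along a suffix of a $T$-path, so it passes through $v^{*}_{T}$.
\item Every $T$-path extends to an entry-exit path of the union, so no strict descendant of $v^{*}_{T}$ lies on all of them.
\end{itemize}

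A smaller point: to speak of the absolute dominator you need the candidates to form a chain (any two are ordered consistently along every path and by every topological order). You use this only implicitly when taking ``the last candidate''.
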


\begin{theorem}
\label{theorem:adapt_complexity}
Algorithm~\ref{algorithm:adapt} terminates.
Its time complexity is $\mathcal{O}(|V| + |E|)$.
\end{theorem}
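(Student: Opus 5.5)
The statement to prove is Theorem~\ref{theorem:adapt_complexity}: termination and a time bound of $\mathcal{O}(|V|+|E|)$. The plan is to bound the cost of each of the four steps of Algorithm~\ref{algorithm:adapt} separately, each in terms of the size of the graph it receives, and then sum. Termination follows because every step is a finite traversal (topological order, or a single pass over nodes and edges) of a finite acyclic graph, with no fixpoint iteration. The one apparent exception, recoloring in step~2, is handled by the amortization argument below.

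\textbf{Step 1 and size preservation.} $insert\_markers$ places one input marker on each input port and one output marker on each output port. The graph $G'=(V',E')$ therefore satisfies $|V'| \le |V| + 2|E|$ and $|E'| \le 3|E|$. These bounds make $|V'|+|E'| = \mathcal{O}(|V|+|E|)$, so all later costs may be stated in terms of $G'$. Acyclicity is preserved because each inserted marker subdivides an existing edge, so a topological order of $G'$ can be computed in linear time once. The step itself does constant work per edge.

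\textbf{Step 2, the main obstacle.} The non-recoloring work is easy to bound. Each node $v$ computes its transparent sets from $\mathfrak{C}$ in $\mathcal{O}(\deg v)$, as the text asserts, so over all nodes this costs $\mathcal{O}(|E'|)$.

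The difficulty is bounding the total number of recolorings. A recoloring at a transparent node could a priori cascade upstream through chains of reshapes or trees of concatenations, which is exactly the quadratic hazard the paper describes. I would use the restriction that only out-edges of input codec markers may be recolored, together with the guarantee from step~1 that every non-marker is immediately preceded by input markers.

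From this I would show that a recoloring triggered at $v$ touches only edges (marker $\to v$), which are in-edges of $v$ itself. Such an edge is processed only when $v$ is visited, since its head is $v$ and it is never revisited. Hence no cascade occurs, each edge is recolored at most once, and total recoloring cost is $\mathcal{O}(\deg^{-}v)$ per node.

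To make this compatible with consistency, I would implement colors with union--find over color identifiers. Merging the colors of a transparent set is then a union, and lazy relabeling in a final pass gives near-linear cost. To reach a strict linear bound, I would argue instead that relabeling is direct on the $\mathcal{O}(\deg v)$ marker edges, because colors of deeper ancestors are never changed. I expect this amortization argument to be the delicate point of the proof. The key is to check that a color propagated downstream is never retroactively invalidated.

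\textbf{Steps 3 and 4.} Inducing subgraphs is a bucket pass over the edges by color, costing $\mathcal{O}(|V'|+|E'|)$. The absolute dominator search traverses each subgraph in the global topological order. It maintains the frontier as a counter of "open" edges, and a node is a candidate exactly when its in-degree within the subgraph equals the frontier size. This test is $\mathcal{O}(1)$ if the frontier count is updated incrementally: subtract the in-degree and add the out-degree. Because the subgraphs partition the edges, the total cost is $\mathcal{O}(|E'|)$.

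Coalescing examines each inter-subgraph adjacency, that is, each marker edge pair, once. The merge test is $\mathcal{O}(1)$: the color is compatible, and the producer subgraph has fan-out $1$ toward the consumer, which preserves dominance as in Figure~\ref{fig:adapt}(d). A merge is performed by relinking subgraph records with union--find or a list splice. Each merge reduces the subgraph count by one, so there are at most $\mathcal{O}(|E'|)$ merges.

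Summing the four steps gives $\mathcal{O}(|V|+|E|)$.
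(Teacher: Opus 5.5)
\textbf{Gap in step 4 (coalescing).} Your plan matches the paper's: bound each of the four steps linearly in the size of the graph it receives, use that marker insertion keeps the graph acyclic and of size $\mathcal{O}(|V|+|E|)$, and sum. Steps 1--2 are essentially the paper's argument, but step 4 has a genuine gap.

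Bounding the \emph{number} of merges by $\mathcal{O}(|E'|)$ does not bound their \emph{cost}. A merge must relabel the absorbed subgraph's edges and update the target's entries and entry edges. It must also redirect the adjacencies that pointed into the absorbed subgraph, so that later merge tests stay $\mathcal{O}(1)$ and see merged targets. Your three options all fall short:
\begin{itemize}
\item Done eagerly in an arbitrary order, this can be quadratic: coalescing a chain in topological order relabels $|S_1|$, then $|S_1|+|S_2|$, and so on.
\item Union--find only gives $\mathcal{O}((|V|+|E|)\,\alpha(|V|+|E|))$.
\item A list splice leaves the relabeling and the adjacency maintenance unaccounted for.
\end{itemize}

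The missing idea is the paper's meta-graph. Make the subgraphs meta-nodes, with meta-edges carrying boundary marker sets. Show this graph is acyclic (meta-ancestry implies ancestry of absolute dominators) and has size $\mathcal{O}(|E|)$. Then process it in \emph{reverse} topological order, which gives three facts:
\begin{itemize}
\item Every absorbed subgraph is still an original one, and its target was already visited and is final. So relabeling costs $\mathcal{O}(|V_k|+|E_k|)$ once per original subgraph, summing to $\mathcal{O}(|E|)$ via $\sum_k |V_k| \le 2|E|$.
\item Redirection costs one step per in-meta-edge plus the size of its boundary. This is linear in total because the boundaries are disjoint sets of markers (Theorem~\ref{theorem:create_meta_graph_boundary}) and each original boundary is unioned at most once.
\item Redirection never increases out-degrees, so the $\mathcal{O}(1)$-per-meta-edge search stays linear overall.
\end{itemize}
Without this ordering and the disjointness of boundaries, your step 4 does not yield a strictly linear bound.

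\textbf{Smaller points.}
\begin{itemize}
\item In step 2, drop the union--find idea. The algorithm never merges color classes; doing so would recolor upstream subgraphs, which is exactly the cascade the design forbids. When a block's in-edges carry several colors, it overwrites them with a fresh color from the port semantics. The work at $v$ is therefore $\mathcal{O}(|E(v)|)$ outright, with no amortization to check.
\item In step 3, your counter update (always subtract the in-degree) is not the paper's frontier. The paper deliberately leaves the in-edges of a non-candidate exit in the frontier, because paths ending there bypass every later node. With your rule, a subgraph $a\to b$, $a\to c$, $c\to d$ visited as $a,b,c,d$ would flag $c$ and $d$ as spurious candidates. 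The set-based frontier already costs $\mathcal{O}(|E^{-}_{k}(v)|+|E^{+}_{k}(v)|)$ per node, so the counter is unnecessary.
\item Also in step 3, traversing subgraphs ``in the global topological order'' is linear only as a single dispatching pass. The paper instead sorts each subgraph separately and relies on $\sum_k |V_k| \le 2|E|$.
\item In step 1, output markers splice a whole fan-out bundle rather than subdividing one edge. Acyclicity still holds, since any path through such a marker shortcuts to a path of the original graph.
\end{itemize}
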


\section{Evaluation}
\label{section:evaluation}

The measurements are taken with respect to the following cost model: \textbf{storage} sums the size of all weights; \textbf{traffic} sums the size of every weight and feature operand read by every operator \textemdash an object read by multiple operators is counted once per consumer; \textbf{codec items} count the array items crossing materialized codecs.
The cost model does not account for device-specific memory-hierarchy (caching) or datapath (fusion) optimizations.
The cost model is also independent of the codec-learning algorithm and application accuracy, which lie outside this work's scope.

I evaluate the algorithm assuming element-wise linear quantization on four models \textemdash ShuffleNetV2, U-Net, SigLIP, and SmolLM2~\citep{2015Ronneberger, 2018Ma, 2023Zhai, 2025BenAllal} \textemdash and under four compression policies: Z is the 32-bit floating-point baseline; Y uses 8-bit integer matrix-matrix multiplications; X also routes data (concatenate, split) as 8-bit integers; W uses 8-bit integers whenever allowed by the hardware.

\begin{table}
\centering
\begin{tabular}{ l r r r r }
\toprule
				&	  Z	&	  Y	&	  X	&	  W	\\
\midrule
ShuffleNetV2	&   8.8	&   2.3	&   2.3	&   2.3	\\
U-Net 			&  93.3	&  23.4	&  23.4	&  23.4	\\
SigLIP 			& 329.7	&  85.0	&  85.0	&  85.0	\\
SmolLM2 		& 405.1	& 101.4	& 101.4	& 101.4	\\
\bottomrule
\end{tabular}
\caption{Model storage (in MiB) across the four compression policies.}
\label{table:storage}
\end{table}

\begin{table}
\centering
\begin{tabular}{ l r r r r }
\toprule
				&		 Z	&							  Y	&							  X	&									  W	\\
\midrule
ShuffleNetV2	&	   54.8	&	   57.9	   \moreisless{5.7}	&	   58.0	   \moreisless{5.8}	&	   39.4	  \lessismore{28.1}	\\
U-Net			&	  920.4	&	 1039.9	  \moreisless{13.0}	&	 1095.9	  \moreisless{19.1}	&	  759.6	  \lessismore{17.5}	\\
SigLIP			&	 1802.7	&	 2029.8	  \moreisless{12.6}	&	 2047.8	  \moreisless{13.6}	&	 1768.0	   \lessismore{1.9}	\\
SmolLM2			&	  741.1	&	  546.3	  \lessismore{26.3}	&	  541.8	  \lessismore{26.9}	&	  427.2	  \lessismore{42.4}	\\
\bottomrule
\end{tabular}
\caption{%
    Model traffic (in MiB) across the four compression policies.
	Arrows report changes relative to Z, rounded to one decimal place.
    In the absence of hardware support for traffic optimization (e.g., fusing the accumulation and decompression of matrix-matrix multiplication outputs), less aggressive compression policies (Y, X) can increase traffic; only policy W realizes savings on every model.
}
\label{table:traffic}
\end{table}

\begin{table}
\centering
\begin{tabular}{ l r r r r }
\toprule
				&	  Z	&	  Y	&	  X	&	  W	\\
\midrule
ShuffleNetV2	&   0.0	&   4.2	&   6.3	&   4.3	\\
U-Net			&   0.0	&  90.4	& 138.9	&  84.0	\\
SigLIP			&   0.0	& 209.2	& 228.1	& 220.2	\\
SmolLM2			&   0.0	&  47.9	&  50.0	&  47.1	\\
\bottomrule
\end{tabular}
\caption{%
	Number of codec items (in M) across the four compression policies.
	On graphs where chains of uncompressed operators (activations, normalizations) are interleaved with isolated data layout operators (reshape, view), policies prescribing that data layout operators use compressed operands to save traffic can increase the number of codec items with respect to policies prescribing that data layout operators use uncompressed operands.
}
\label{table:codec_items}
\end{table}

Table~\ref{table:storage} shows that storage savings are not affected by the aggressiveness of the compression policy: compressing the weight inputs of matrix-matrix multiplications yields a $75\%$ saving that is consistent across all compression policies.
Table~\ref{table:traffic} shows that policy W reduces total modeled traffic by $2\%$--$42\%$ relative to Z.
However, these totals include the reduction in weight traffic already reflected by Table~\ref{table:storage}.
As a simple decomposition, subtracting the storage reduction once from Z suggests that feature compression reduces modeled traffic by approximately $18\%$ on ShuffleNetV2, $11\%$ on U-Net, and $2\%$ on SmolLM2, while increasing it by approximately $13\%$ on SigLIP.
Thus, under the cost model, aggressive feature compression is not universally beneficial.

\begin{table}[t]
\centering
\begin{tabular}{ l r r r }
\toprule
				&	Local	&				Non-local (w/o)	&				 Non-local (w/)	\\
\midrule
ShuffleNetV2	&	   70.6	&	   39.4	  \lessismore{44.2}	&	   39.4	  \lessismore{44.2}	\\
U-Net			&	 1251.1	&	  767.1	  \lessismore{38.7}	&	  759.6	  \lessismore{39.3}	\\
SigLIP			&	 2311.8	&	 1768.0	  \lessismore{23.5}	&	 1768.0	  \lessismore{23.5}	\\
SmolLM2			&	  643.1 &	  427.2	  \lessismore{33.6}	&	  427.2	  \lessismore{33.6}	\\
\bottomrule
\end{tabular}
\caption{%
    Model traffic (in MiB) across three codec placement algorithms with fixed weight compression.
	Arrows report changes relative to local placement, rounded to one decimal place.
	The local baseline places codecs at the boundaries of every operator that requires encoded operands, independently for each operator; this policy does not track cross-boundary relationships that reveal opportunities for codec parameter sharing, and thus introduces redundant decode-encode pairs.
	The non-local algorithms apply policy W without and with coalescence; coalescence delivers savings for U-Net, which contains several routing and data layout operators.
}
\label{table:local_vs_non_local}
\end{table}

Table~\ref{table:local_vs_non_local} shows that the algorithm reduces model traffic by $24\%$--$44\%$ with respect to local placement.
As a sanity check, I collected exploratory GPU measurements for ShuffleNetV2: the latency drops by $30\%$ from $0.70\,\mathrm{ms} \pm 0.35\,\mathrm{\mu s}$ to $0.49\,\mathrm{ms} \pm 0.53\,\mathrm{\mu s}$ when switching from local placement to non-local placement; the $0.7$ ratio of latency saving to modeled traffic saving is consistent with operand traffic accounting for only part of ShuffleNetV2's execution time.

\section{Discussion}
\label{section:discussion}

This work presents a linear-time algorithm that achieves non-local placement of codec operators from local, per-operator commutation rules and edge-partitions the graph into weakly connected subgraphs, each with an absolute dominator.
The algorithm provides correctness guarantees assuming the correctness of the supplied local codec commutation rules; it does not verify these rules.
The coalescence step greedily attempts to merge adjacent subgraphs while preserving the absolute dominance property; it reduces the number of subgraphs but is not claimed to minimize it.

The evaluation shows that non-local placement reduces operand traffic across all four architectures, even though the policy comparison shows that feature compression itself is not beneficial for every model.
Latency measurements for ShuffleNetV2 align with the predictions of the cost model \textemdash with the caveat that these figures are exploratory and not part of a broad measurement campaign.
The learning of parametric codecs at absolute dominators (calibration, fine-tuning) and application accuracy evaluations are downstream steps, outside this work's scope.

A reflection on the nature of codec commutation rules provides insight into the trade-offs between local and non-local rules for non-local codec placement.
In its simplest form, the semantics attached to each edge is a triple of a unique identifier, a source data type, and a target data type; this definition does not capture whether the parameters of a codec are derived from the parameters of other codecs \textemdash e.g., a dequantization scale obtained as the product of a weight quantization scale and a feature quantization scale, or the block structure of a block-quantized tensor obtained by concatenating two block-quantized tensors.
The simplicity of the algorithm is not inherent, but follows directly from the simplicity of the commutation rules: because a commutation rule simply declares that certain ports share a semantics, assignment reduces to enforcing an equivalence relation over edges, and it is this equivalence that makes non-local placement derivable from local rules in linear time.
Enriching the semantics to include codec parameter transformations would change the assignment problem from locally enforcing an equivalence relationship to non-locally composing transformation rules, potentially reintroducing the reconciliation cost that parameter sharing specifications and pattern catalogs pay through enumeration and arbitration.
Determining which classes of transformation preserve the weak connectedness, absolute dominance, and linear-time guarantees is an open question.

\acksection

I thank Leonardo Lucio Custode, Massimiliano Datres, Martino Dazzi, Johannes Thiele, Francesco Vidaich, and Xin Wang for their feedback on the manuscript.
I thank Evangelos Eleftheriou, Theodoros Rekatsinas, and Bram-Ernst Verhoef for managerial support.

This work has been supported by the European Union through the European High-Performance Computing Joint Undertaking ({EuroHPC JU}) under Grant 101202459 ({DARE-SGA1}).

\clearpage
\bibliographystyle{plainnat}
\bibliography{bibliography}

\clearpage
\appendix

\section{Pseudocode \& proofs}
\label{section:pseudocode_and_proofs}

\subsection*{Notation}

\subsubsection*{Semantics}
In its simplest form, a \textbf{semantics} is a triple $\mathbf{s} = (\mathtt{i}, \mathtt{s}, \mathtt{t})$ where $\mathtt{i}$ is a unique identifier, $\mathtt{s}$ is a \textbf{source} data type, and $\mathtt{t}$ is a \textbf{target} data type.
The intuition of source and target data types is the following: the source data type is the data type of an array in the source model; the target data type is the data type of the array in the compressed model.
Two semantics are said to be \textbf{compatible} if they have the same source and target data types.

\subsubsection*{Port blocks and commutation table}

A \textbf{port block} describes which input- and output-port sets of an operator must share the same semantics.
As an example, consider a binary concatenate operator: both its input ports and its output port must share the same semantics.
Port blocks for all operators are aggregated in a \textbf{commutation table} $\mathfrak{C}$.

\subsubsection*{Graph}
In this work, a \textbf{graph} is a directed acyclic multigraph $G = (V, E)$.

Items $v \in V$ are \textbf{nodes}.
Each node has three attributes:
\begin{itemize}
	\item $.operator$ is the \textbf{operator} represented by $v$;
	\item $.input$ is a dictionary mapping each \textbf{input port} $p$ to its semantics $\mathbf{p}$;
	\item $.output$ is a dictionary mapping each \textbf{output port} $q$ to its semantics $\mathbf{q}$.
\end{itemize}

Items $e \in E$ are \textbf{edges}; an edge is a triple $e = (u, v, (q, p))$, where $u \in V$ is the \textbf{tail}, $v \in V$ is the \textbf{head}, and $(q, p)$ is a \textbf{key} denoting which \textit{output port} $q$ of $u.operator$ writes the object that \textit{input port} $p$ of $v.operator$ reads.
Each edge has two attributes:
\begin{itemize}
	\item $.semantics$ is the semantics $\mathbf{e}$ of the data routed along the edge;
	\item $.subgraph$ is the subgraph $G_{k} \subseteq G$ to which the edge belongs and over which codecs commute.
\end{itemize}

The set $E^{-}(v) = \{ (u, \hat{v}, (q, p)) \in E \,|\, \hat{v} = v \}$ contains the \textbf{in-edges} of node $v$, and $\delta^{-}(v) = |E^{-}(v)|$ is its \textbf{in-degree}.
The set $E^{+}(v) = \{ (\hat{v}, w, (q, p)) \in E \,|\, \hat{v} = v \}$ contains the \textbf{out-edges} of node $v$, and $\delta^{+}(v) = |E^{+}(v)|$ is its \textbf{out-degree}.

The \textbf{absolute dominator} of a weakly connected subgraph $G_{k} \subseteq G$ is the maximal element of the totally ordered set of nodes of $G_{k}$ that are on every entry-exit path.

\subsubsection*{Meta-graph}
A meta-graph $\mathbb{G} = (\mathbb{V}, \mathbb{E})$ is a directed graph.

Items $\mathbb{v}_{k} \in \mathbb{V}$ are \textbf{meta-nodes}.
Each meta-node has one attribute:
\begin{itemize}
	\item $.subgraph$ is a weakly connected subgraph $G_{k} \subseteq G$ induced by same-semantics edges.
\end{itemize}

Items $\mathbb{e} = (\mathbb{v}_{k}, \mathbb{v}_{j}) \in \mathbb{E}$ are \textbf{meta-edges}.
Each meta-edge has one attribute:
\begin{itemize}
	\item $.boundary$ is the set of codec markers that are both exits of $\mathbb{v}_{k}.subgraph$ (the upstream subgraph $G_{k}$) and entries of $\mathbb{v}_{j}.subgraph$ (the downstream subgraph $G_{j}$).
\end{itemize}

The set $\mathbb{E}^{-}(\mathbb{v})$ is the set of \textbf{in-edges} of meta-node $\mathbb{v}$, and $\Delta^{-}(\mathbb{v})$ is its \textbf{in-degree}.
The set $\mathbb{E}^{+}(\mathbb{v})$ is the set of \textbf{out-edges} of meta-node $\mathbb{v}$, and $\Delta^{+}(\mathbb{v})$ is its \textbf{out-degree}.

\subsubsection*{Data structures}

All containers are assumed to be ordered.
The $i$-th item of container $C$ can be accessed with the $C[i]$ notation; positional access and membership tests do constant work ($\mathcal{O}(1)$).
The $copy$ function traverses a container $C$ and returns another container with pointers to the original items; the $reversed$ function traverses a container and returns another container with pointers to the original items, but in reverse order; both functions do work proportional to the size $|C|$ of the source container ($\mathcal{O}(|C|)$).

Dictionaries are assumed to be ordered; given a dictionary $d$, $d.values()$ returns a container of its values.
When a key is not found in a special $defaultdict$ dictionary, the dictionary returns a new instance as prescribed by its constructor's argument.

Set operations are assumed to be performed in place whenever an assignment's left-hand side is an operand on the assignment's right-hand side.
For instance, $B \gets A \cup B$ means that $B$ is updated to include the items in $A$, with work proportional to $A$'s size ($\mathcal{O}(|A|)$).

\subsection*{Adapt}

\subsubsection*{Proof of Theorem~\ref{theorem:adapt_correctness}}

\begin{proof}
The claim follows from Theorem~\ref{theorem:induce_subgraphs_correctness} and Theorem~\ref{theorem:coalesce_subgraphs_correctness}.
\end{proof}

\subsubsection*{Proof of Theorem~\ref{theorem:adapt_complexity}}

\begin{proof}
The algorithm consists of four steps, with the following time complexities:
\begin{itemize}
	\item $\mathcal{O} (|V| + |E|)$ (see Theorem~\ref{theorem:insert_markers_complexity});
	\item $\mathcal{O} (|V| + |E|)$ (see Theorem~\ref{theorem:assign_semantics_complexity});
	\item $\mathcal{O} (|V| + |E|)$ (see Theorem~\ref{theorem:induce_subgraphs_complexity});
	\item $\mathcal{O} (|V| + |E|)$ (see Theorem~\ref{theorem:coalesce_subgraphs_complexity}).
\end{itemize}
Since the steps are executed sequentially, the time complexity sums to $\mathcal{O}(|V| + |E|)$.
\end{proof}

\subsection*{Insert markers}

\begin{algorithm}
\caption{$G \gets insert\_markers(G)$}
\label{algorithm:insert_markers}
\begin{algorithmic}
	\State $G \gets insert\_mark\_input\_codecs(G)$
	\State $G \gets insert\_mark\_output\_codecs(G)$
	\\
	\Return $G$
\end{algorithmic}
\end{algorithm}

\begin{theorem}
\label{theorem:insert_markers_correctness}

Algorithm~\ref{algorithm:insert_markers} is correct.
\begin{itemize}
	\item After execution, each node $v \in V$ satisfies exactly one of the following:
	\begin{enumerate}
		\item[1.] it is a $mark\_input\_codec$;
		\item[2.] it is a $mark\_output\_codec$;
		\item[3.] it has only $mark\_input\_codec$ predecessors and $mark\_output\_codec$ successors.
	\end{enumerate}
	\item The algorithm is idempotent.
\end{itemize}

\begin{proof}
The first claim of Theorem~\ref{theorem:insert_mark_input_codecs_correctness} guarantees that, after executing Algorithm~\ref{algorithm:insert_mark_input_codecs}, each non-marker has only $mark\_input\_codec$ predecessors.
Thus, Algorithm~\ref{algorithm:insert_mark_output_codecs} cannot place $mark\_output\_codec$ nodes before non-markers.
The first claim of Theorem~\ref{theorem:insert_mark_output_codecs_correctness} guarantees that, after executing Algorithm~\ref{algorithm:insert_mark_output_codecs}, each non-marker has only $mark\_output\_codec$ successors.
This proves the first claim.

The first claim of Theorem~\ref{theorem:insert_mark_input_codecs_correctness} guarantees that Algorithm~\ref{algorithm:insert_mark_output_codecs} cannot break the invariant established by Algorithm~\ref{algorithm:insert_mark_input_codecs}.
Thus, the second claim of Theorem~\ref{theorem:insert_mark_input_codecs_correctness} guarantees that any application of Algorithm~\ref{algorithm:insert_mark_input_codecs} after the first does not edit the graph, and cannot break the invariant established by Algorithm~\ref{algorithm:insert_mark_output_codecs}.
Since the graph remains unchanged, the second claim of Theorem~\ref{theorem:insert_mark_output_codecs_correctness} guarantees that any application of Algorithm~\ref{algorithm:insert_mark_output_codecs} after the first also does not edit the graph.
This proves the second claim.
\end{proof}

\end{theorem}

\begin{theorem}
\label{theorem:insert_markers_size}

Algorithm~\ref{algorithm:insert_markers} produces a multigraph $G'' = (V'', E'')$ with the following properties:
\begin{itemize}
	\item $G''$ is acyclic;
	\item $|V''| = \mathcal{O}(|V| + |E|)$ and $|E''| = \mathcal{O}(|E|)$ \textemdash i.e., $|G''| = \mathcal{O}(|G|)$.
\end{itemize}

\begin{proof}
The first claim follows from the first claims of Theorem~\ref{theorem:insert_mark_input_codecs_size} and Theorem~\ref{theorem:insert_mark_output_codecs_size}.

The second claims of Theorem~\ref{theorem:insert_mark_input_codecs_size} and Theorem~\ref{theorem:insert_mark_output_codecs_size} imply that $|V''| \leq |V'| + |E'| \leq |V| + 3|E|$ and $|E''| \leq 2|E'| \leq 4|E|$.
This proves the second claim.
\end{proof}

\end{theorem}

\begin{theorem}
\label{theorem:insert_markers_complexity}

Algorithm~\ref{algorithm:insert_markers} terminates.
Its time complexity is $\mathcal{O}(|V| + |E|)$.

\begin{proof}
By Theorem~\ref{theorem:insert_mark_input_codecs_complexity}, the first step has complexity $\mathcal{O}(|V| + |E|)$.
By Theorem~\ref{theorem:insert_mark_output_codecs_complexity}, the second step has complexity $\mathcal{O}(|V| + |E|)$.
Since the steps are executed sequentially, the time complexity sums to $\mathcal{O}(|V| + |E|) + \mathcal{O}(|V| + |E|) = \mathcal{O}(|V| + |E|)$.
\end{proof}

\end{theorem}

\begin{algorithm}
\caption{$G \gets insert\_mark\_input\_codecs(G)$}
\label{algorithm:insert_mark_input_codecs}
\begin{algorithmic}[1]
	\State $V \gets copy(G.nodes)$
	\For{$v \in V$}
		\If{$v.operator \notin \{ mark\_input\_codec, mark\_output\_codec \}$}
			\State $E \gets copy(G.in\_edges(v))$
			\For{$e \in E$}
				\State $(u, \_, (\_, \_)) \gets e$
				\If{$\neg (u.operator = mark\_input\_codec)$}
					\State $\_ \gets splice\_mark\_input\_codec(G, e)$
				\EndIf
			\EndFor
		\EndIf
	\EndFor
	\\
	\Return $G$
\end{algorithmic}
\end{algorithm}

\begin{algorithm}
\caption{$n \gets splice\_mark\_input\_codec(G, e)$}
\label{algorithm:splice_mark_input_codec}
\begin{algorithmic}[1]
	\State $(u, v, (q, p)) \gets e$
	\State $( \_, \mathtt{s}, \mathtt{t} ) \gets v.input[p]$
	\State $G.add\_node(n)$
	\State $n.operator \gets mark\_input\_codec$
	\State $n.input \gets \{ 'input': ( generate\_id(), \mathtt{s}, \mathtt{s} ) \}$
	\State $n.output \gets \{ 'output': ( generate\_id(), \mathtt{s}, \mathtt{t} ) \}$
	\State $G.add\_edge(u, n, (q, 'input'))$
	\State $G.add\_edge(n, v, ('output', p))$
	\State $G.remove\_edge(u, v, (q, p))$
	\\
	\Return $n$
\end{algorithmic}
\end{algorithm}

\begin{theorem}
\label{theorem:insert_mark_input_codecs_correctness}

Algorithm~\ref{algorithm:insert_mark_input_codecs} is correct.
\begin{itemize}
    \item Each node $v \in V$ satisfies exactly one of the following:
    	\begin{enumerate}
    		\item[1.] it is a $mark\_input\_codec$;
    		\item[2.] it is a $mark\_output\_codec$;
    		\item[3.] it has only $mark\_input\_codec$ predecessors.
    	\end{enumerate}
    \item The algorithm is idempotent.
\end{itemize}

\begin{proof}
To prove the first claim, I restate it formally.
Partition $V$ according to each node's operator:
\begin{itemize}
	\item $A := \{ v \in V \,|\, v.operator = mark\_input\_codec \}$;
	\item $B := \{ v \in V \,|\, v.operator = mark\_output\_codec \}$;
	\item $C := V \setminus (A \cup B)$.
\end{itemize}
Partition $E$ according to each edge's head operator:
\begin{itemize}
	\item $L := \{ (u, v, (q, p)) \in E \,|\, v \in A \}$;
	\item $M := \{ (u, v, (q, p)) \in E \,|\, v \in B \}$;
	\item $N := \{ (u, v, (q, p)) \in E \,|\, v \in C \}$.
\end{itemize}
Then define $I := \{ (u, v, (q, p)) \in N \,|\, (u, v) \in A \times C \}$.
With this notation, the first claim states that for any $v \in V$ exactly one of the following properties holds:
\begin{itemize}
	\item $v \in A$;
	\item $v \in B$;
	\item $E^{-}(v) \subseteq I$.
\end{itemize}

The outer loop visits each node present in the input graph exactly once.
Consider the visit to a node $v$.
Let $N(v) := E^{-}(v) \cap N$ and $I(v) := E^{-}(v) \cap I$.
\begin{itemize}
	\item If $v \in A \cup B$, the operator guard skips it and $E^{-}(v) \subseteq L \cup M$ is not modified.
	\item If $v \in C$, the inner loop scans $E^{-}(v)$: the tail guard skips any edge $e \in I(v)$, while an edge $e = (u, v, (q, p)) \in N(v) \setminus I(v)$ is removed and replaced, through $splice\_mark\_input\_codec$, by an edge $(u, n, (q, 'input')) \in L$ and an edge $(n, v, ('output', p)) \in I(v)$; after the loop, $I(v) = N(v)$.
\end{itemize}
Note that $splice\_mark\_input\_codec$ is \textit{local}, in that no later visit modifies $E^{-}(v)$, so $I(v) = N(v)$ persists for $v$ until termination.
Since every $v \in C$ is visited, $I = \cup_{v \in V} I(v) = \cup_{v \in V} N(v) = N$.
This proves the first claim.

To prove the second claim, assume that the properties hold for every $v \in V$.
If $v \in A \cup B$, the operator guard skips it: the graph is not edited.
If $v \in C$, then $I(v) = N(v)$ and the tail guard $u.operator \neq mark\_input\_codec$ is hit for every in-edge, preventing any splicing from occurring: the graph is not edited.
The algorithm performs no edit and is thus idempotent.
This proves the second claim.
\end{proof}

\end{theorem}

\begin{theorem}
\label{theorem:insert_mark_input_codecs_size}

Algorithm~\ref{algorithm:insert_mark_input_codecs} produces a graph $G' = (V', E')$ with the following properties:
\begin{itemize}
	\item $G'$ is acyclic;
	\item $|V'| \leq |V| + |E|$ and $|E'| \leq 2|E|$ \textemdash i.e., $|G'| = \mathcal{O}(|G|)$.
\end{itemize}

\begin{proof}
To prove the first claim, it suffices to prove that a splicing cannot introduce a cycle.
Given an edge $e \in E$, let $\Pi(e) = \{ \pi \,|\, e \in \pi \}$ be the set of all paths to which $e$ belongs.
If $e \in E$ is spliced, each path $\pi = (a, \dots, u, v, \dots, b) \in \Pi(e)$ is replaced by $\pi' = (a, \dots, u, n, v, \dots, b)$.
Any path starting at $n$ must pass through $v$ (its only successor), and $v$ cannot reach any ancestor $a$ of $u$ due to the acyclicity of $G$; so $n$ cannot reach any ancestor $a$ of $u$ and cannot close a cycle.
This proves the first claim.

To prove the second claim, it suffices to count.
Each splicing inserts one node, adds two edges, and removes one edge: it thus adds one node and one edge.
Each edge $e \in E$ can be spliced at most once, so the algorithm can insert at most $|E|$ nodes and $|E|$ edges.
This proves the second claim.
\end{proof}

\end{theorem}

\begin{theorem}
\label{theorem:insert_mark_input_codecs_complexity}

Algorithm~\ref{algorithm:insert_mark_input_codecs} terminates.
Its time complexity is $\mathcal{O}(|V| + |E|)$.

\begin{proof}
The first instruction executes $|V|$ operations.
The outer loop executes $|V|$ iterations \textemdash one for each $v \in V$.
Assume that the operator guard is never hit.
After copying $E^{-}(v)$ ($\mathcal{O}(\delta^{-}(v))$), the inner loop executes $\delta^{-}(v)$ iterations \textemdash one for each $e \in E^{-}(v)$.
Assume that the tail guard is never hit.
Each invocation of Algorithm~\ref{algorithm:splice_mark_input_codec} does constant work ($\mathcal{O}(1)$).
The total work is then bounded by
\begin{align*}
	\mathcal{O}(|V|) + \sum_{v \in V} \left( \mathcal{O}(\delta^{-}(v)) + \delta^{-}(v) \mathcal{O}(1) \right)
	&= \mathcal{O}(|V|) + \sum_{v \in V} \mathcal{O}(\delta^{-}(v)) + \mathcal{O}(1) \sum_{v \in V} \delta^{-}(v) \\
	&= \mathcal{O}(|V|) + \mathcal{O}(|E|) + \mathcal{O}(|E|) \\
	&= \mathcal{O}(|V| + |E|) \,.
\end{align*}
\end{proof}

\end{theorem}

\begin{algorithm}
\caption{$G \gets insert\_mark\_output\_codecs(G)$}
\label{algorithm:insert_mark_output_codecs}
\begin{algorithmic}[1]
	\State $V \gets copy(G.nodes)$
	\For{$v \in V$}
		\If{$v.operator \notin \{ mark\_input\_codec, mark\_output\_codec \}$}
			\State $output\_port\_to\_edges \gets defaultdict(\emptyset)$
			\For{$f \in G.out\_edges(v)$}
				\State $(\_, \_, (q, \_)) \gets f$
				\State $F \gets output\_port\_to\_edges[q]$
				\State $F.add(f)$
			\EndFor
			\For{$F \in output\_port\_to\_edges.values()$}
				\State $(\_, w, (\_, \_)) \gets F[0]$
				\If{$\neg ((len(F) = 1) \land (w.operator = mark\_output\_codec))$}
					\State $\_ \gets splice\_mark\_output\_codec(G, F)$
				\EndIf
			\EndFor
		\EndIf
	\EndFor
	\\
	\Return $G$
\end{algorithmic}
\end{algorithm}

\begin{algorithm}
\caption{$n \gets splice\_mark\_output\_codec(G, F)$}
\label{algorithm:splice_mark_output_codec}
\begin{algorithmic}[1]
	\State $(v, w, (q, p)) \gets F[0]$
	\State $( \_, \mathtt{s}, \mathtt{t} ) \gets v.output[q]$
	\State $G.add\_node(n)$
	\State $n.operator \gets mark\_output\_codec$
	\State $n.input \gets \{ 'input': ( generate\_id(), \mathtt{s}, \mathtt{t} ) \}$
	\State $n.output \gets \{ 'output': ( generate\_id(), \mathtt{s}, \mathtt{s} ) \}$
	\State $G.add\_edge(v, n, (q, 'input'))$
	\For{$f \in F$}
		\State $(\_, w, (\_, p)) \gets f$
		\State $G.add\_edge(n, w, ('output', p))$
	\EndFor
	\For{$f \in F$}
		\State $(\_, w, (q, p)) \gets f$
		\State $G.remove\_edge(v, w, (q, p))$
	\EndFor
	\\
	\Return $n$
\end{algorithmic}
\end{algorithm}

\begin{theorem}
\label{theorem:insert_mark_output_codecs_correctness}

Algorithm~\ref{algorithm:insert_mark_output_codecs} is correct.
\begin{itemize}
	\item Each node $v \in V$ satisfies exactly one of the following:
	\begin{enumerate}
		\item[1.] it is a $mark\_input\_codec$;
		\item[2.] it is a $mark\_output\_codec$;
		\item[3.] it has only $mark\_output\_codec$ successors.
	\end{enumerate}
	\item The algorithm is idempotent.
\end{itemize}

\begin{proof}
To prove the first claim, I restate it formally.
Partition $V$ according to each node's operator:
\begin{itemize}
    \item $A = \{ v \in V \,|\, v.operator = mark\_input\_codec \}$;
    \item $B = \{ v \in V \,|\, v.operator = mark\_output\_codec \}$;
    \item $C = V \setminus (A \cup B)$.
\end{itemize}
Partition $E$ according to each edge's tail operator:
\begin{itemize}
    \item $L = \{ (v, w, (q, p)) \in E \,|\, v \in A \}$;
    \item $M = \{ (v, w, (q, p)) \in E \,|\, v \in B \}$;
    \item $N = \{ (v, w, (q, p)) \in E \,|\, v \in C \}$.
\end{itemize}
Then define $O := \ (v, w, (q, p)) \in N \,|\, (v, w) \in C \times B \}$.
With this notation, the first claim states that for any $v \in V$ exactly one of the following properties holds:
\begin{itemize}
	\item $v \in A$;
	\item $v \in B$;
	\item $E^{+}(v) \subseteq O$.
\end{itemize}

The outer loop visits each node present in the input graph exactly once.
Consider the visit to a node $v \in V$.
Let $N(v) := E^{+}(v) \cap N$ and $O(v) := E^{+}(v) \cap O$; also, for each $q \in v.output$, let $E^{+}(v, q) := \{ (v, w, (\hat{q}, p)) \in E^{+}(v) \,|\, \hat{q} = q \}$ and $O(v, q) := E^{+}(v, q) \cap O$.
\begin{itemize}
	\item If $v \in A \cup B$, the operator guard skips it and $E^{+}(v) \subseteq L \cup M$ is not modified.
	\item If $v \in C$, the first inner loop partitions $E^{+}(v)$ into $\mathcal{E}^{+}(v) := \{ E^{+}(v, q) \,|\, q \in v.output \}$.
	The second inner loop scans $\mathcal{E}^{+}(v)$.
	\begin{itemize}
    	\item If $E^{+}(v, q) = O(v, q)$ and $|O(v, q)| = 1$, the head guard skips it.
    	\item Otherwise, its edges are removed and replaced, through $splice\_mark\_output\_codec$, by $\{ (v, n, (q, 'input')) \} = O(v, q)$ (note that $|O(v, q)| = 1$) and $E^{+}(n, 'output') = \{ (n, w, ('output', p)) \,|\, (v, w, (q, p)) \in E^{+}(v, q) \} \subseteq M$.
	\end{itemize}
	After the loop, $O(v) = N(v)$, and every $O(v, q)$ is a singleton.
\end{itemize}
Note that $splice\_mark\_output\_codec$ is \textit{local}, in that no later visit modifies $E^{+}(v)$, so $O(v) = N(v)$ persists for $v$ until termination.
Since every $v \in C$ is visited, $O = \cup_{v \in V} O(v) = \cup_{v \in V} N(v) = N$.
This proves the first claim.

To prove the second claim, assume that the properties hold for every $v \in V$.
If $v \in A \cup B$, the operator guard skips it: the graph is not edited.
If $v \in C$, then $O(v) = N(v)$, every $O(v, q)$ is a singleton, and the head guard is hit, preventing any splicing from occurring: the graph is not edited.
The algorithm performs no edit and is thus idempotent.
This proves the second claim.
\end{proof}

\end{theorem}

\begin{theorem}
\label{theorem:insert_mark_output_codecs_size}

Algorithm~\ref{algorithm:insert_mark_output_codecs} produces a graph $G' = (V', E')$ with the following properties:
\begin{itemize}
	\item $G'$ is acyclic;
	\item $|V'| \leq |V| + |E|$ and $|E'| \leq 2|E|$ \textemdash i.e., $|G'| = \mathcal{O}(|G|)$.
\end{itemize}

\begin{proof}
To prove the first claim, it suffices to prove that a splicing cannot introduce a cycle.
Given an edge $e \in E$, let $\Pi(e) = \{ \pi \,|\, e \in \pi \}$ be the set of all paths to which $e$ belongs.
Let $E(n)$ be the edge set spliced by $n$, let $\Pi(n) = \cup_{e \in E(n)} \Pi(e)$ be the set of paths modified by the splicing action, and let $\Pi'(n) = \{ (a, \dots, v, n, w, \dots, b) \,|\, (a, \dots, v, w, \dots, b) \in \Pi(n) \}$ be the set of modified paths.
Any path starting at $n$ must pass first through some strict descendant $w$ of $v$, and $w$ cannot reach any ancestor $a$ of $v$ due to the acyclicity of $G$; so $n$ cannot reach any ancestor $a$ of $v$ and cannot close a cycle.
This proves the first claim.

To prove the second claim, it suffices to count.
Each splicing on $E^{+}(v, q)$ inserts one node, adds $1 + |E^{+}(v, q)|$ edges, and removes $|E^{+}(v, q)|$ edges: it thus adds one node and one edge.
Each node has at most $\delta^{+}(v)$ out-edges, so the splicings on a node's out-sets insert at most $\delta^{+}(v)$ nodes and $\delta^{+}(v)$ edges.
If each node were to be spliced, the algorithm could insert at most $\sum_{v \in V} \delta^{+}(v) = |E|$ nodes and an equal number of edges.
This proves the second claim.
\end{proof}

\end{theorem}

\begin{theorem}
\label{theorem:insert_mark_output_codecs_complexity}

Algorithm~\ref{algorithm:insert_mark_output_codecs} terminates.
Its time complexity is $\mathcal{O}(|V| + |E|)$.

\begin{proof}
The first instruction executes $|V|$ operations.
The outer loop executes $|V|$ iterations \textemdash one for each $v \in V$.
Assume that the operator guard is never hit.
After constant work ($\mathcal{O}(1)$), the first inner loop executes $\delta^{+}(v)$ iterations \textemdash one for each $e \in E^{+}(v)$.
This loop partitions $E^{+}(v)$ into
\[
	\mathcal{E}^{+}(v) = \left\{ E^{+}(v, q) = \left\{ (v, w, (\hat{q}, p)) \in E^{+}(v) \,|\, \hat{q} = q \right\} \,|\, q \in v.output \right\} \,.
\]
The second inner loop executes $|\mathcal{E}^{+}(v)|$ iterations \textemdash one for each $E^{+}(v, q) \in \mathcal{E}^{+}(v)$.
Assume that the head guard is never hit.
Each invocation of Algorithm~\ref{algorithm:splice_mark_output_codec} does work proportional to $|E^{+}(v, q)|$.
The total work is then bounded by
\begin{align*}
	\lefteqn{
		\mathcal{O}(|V|) + \sum_{v \in V} \left( \mathcal{O}(1) + \mathcal{O}(\delta^{+}(v)) + \sum_{E^{+}(v, q) \in \mathcal{E}^{+}(v)} \mathcal{O}(|E^{+}(v, q)|) \right)
	} \\
	&= \mathcal{O}(|V|) + \sum_{v \in V} \mathcal{O}(1) + \sum_{v \in V} \mathcal{O}(\delta^{+}(v)) + \sum_{v \in V} \sum_{E^{+}(v, q) \in \mathcal{E}^{+}(v)} \mathcal{O}(|E^{+}(v, q)|) \\
	&= \mathcal{O}(|V|) + \mathcal{O}(|V|) + \mathcal{O}(|E|) + \mathcal{O}(|E|) \\
	&= \mathcal{O}(|V| + |E|) \,.
\end{align*}
\end{proof}

\end{theorem}

\subsection*{Assign semantics}

\begin{algorithm}
\caption{$G \gets assign\_semantics(G, \mathfrak{C})$}
\label{algorithm:assign_semantics}
\begin{algorithmic}[1]
	\State $V \gets topological\_sort(G)$
	\For{$v \in V$}
		\If{$v.operator = mark\_input\_codec$}
			\State $assign\_semantics\_mark\_input\_codec(G, v)$
		\ElsIf{$v.operator = mark\_output\_codec$}
			\State $assign\_semantics\_mark\_output\_codec(G, v)$
		\Else
			\State $assign\_semantics\_non\_marker(G, v, \mathfrak{C})$
		\EndIf
	\EndFor
	\\
	\Return $G$
\end{algorithmic}
\end{algorithm}

\begin{algorithm}
\caption{$assign\_semantics\_mark\_input\_codec(G, v)$}
\label{algorithm:assign_semantics_mark_input_codec}
\begin{algorithmic}[1]
	\State $e \gets G.in\_edges(v)[0]$
	\State $\mathbf{e} \gets e.semantics$
	\State $\mathbf{q} \gets v.output['output']$
	\If{$is\_compatible(\mathbf{e}, \mathbf{q})$}
		\State $\mathbf{s} \gets \mathbf{e}$
	\Else
		\State $\mathbf{s} \gets \mathbf{q}$
	\EndIf
	\For{$f \in G.out\_edges(v)$}
		\State $f.semantics \gets \mathbf{s}$
	\EndFor
\end{algorithmic}
\end{algorithm}

\begin{algorithm}
\caption{$assign\_semantics\_mark\_output\_codec(G, v)$}
\label{algorithm:assign_semantics_mark_output_codec}
\begin{algorithmic}[1]
	\State $e \gets G.in\_edges(v)[0]$
	\State $\mathbf{e} \gets e.semantics$
	\State $\mathbf{s} \gets \mathbf{e}$
	\For{$f \in G.out\_edges(v)$}
		\State $f.semantics \gets \mathbf{s}$
	\EndFor
\end{algorithmic}
\end{algorithm}

\begin{algorithm}
\caption{$assign\_semantics\_non\_marker(G, v, \mathfrak{C})$}
\label{algorithm:assign_semantics_non_marker}
\begin{algorithmic}[1]
	\For{$edge\_block \in compute\_edge\_blocks(G, v, \mathfrak{C})$}
		\State $E \gets edge\_block.incoming$
		\State $F \gets edge\_block.outgoing$
		\State $\mathbf{E} \gets \emptyset$
		\For{$e \in E$}
			\State $\mathbf{e} \gets e.semantics$
			\State $\mathbf{E}.add(\mathbf{e})$
		\EndFor
		\State $\mathbf{S} \gets \emptyset$
		\For{$e \in E$}
			\State $(\_, v, (\_, p)) \gets e$
			\State $\mathbf{p} \gets v.input[p]$
			\State $\mathbf{S}.add(\mathbf{p})$
		\EndFor
		\For{$f \in F$}
			\State $(v, \_, (q, \_)) \gets f$
			\State $\mathbf{q} \gets v.output[q]$
			\State $\mathbf{S}.add(\mathbf{q})$
		\EndFor
		\If{$len(\mathbf{E}) = 1$}
			\State $\mathbf{s} \gets \mathbf{E}[0]$
		\Else
			\State $\mathbf{s} \gets \mathbf{S}[0]$
		\EndIf
		\For{$e \in E$}
			\State $e.semantics \gets \mathbf{s}$
		\EndFor
		\For{$f \in F$}
			\State $f.semantics \gets \mathbf{s}$
		\EndFor
	\EndFor
\end{algorithmic}
\end{algorithm}

\begin{algorithm}
\caption{$[ edge\_block ] \gets compute\_edge\_blocks(G, v, \mathfrak{C})$}
\label{algorithm:compute_edge_blocks}
\begin{algorithmic}[1]
	\State $port\_index \gets \mathfrak{C}[v.operator]$
	\State $port\_block\_to\_edge\_block \gets defaultdict(EdgeBlock(incoming \gets \emptyset, outgoing \gets \emptyset))$
	\For{$e \in G.in\_edges(v)$}
		\State $(\_, v, (\_, p)) \gets e$
		\State $port\_block \gets port\_index.input[p]$
		\State $edge\_block \gets port\_block\_to\_edge\_block[port\_block]$
		\State $edge\_block.incoming.add(e)$
	\EndFor
	\For{$f \in G.out\_edges(v)$}
		\State $(v, \_, (q, \_)) \gets f$
		\State $port\_block \gets port\_index.output[q]$
		\State $edge\_block \gets port\_block\_to\_edge\_block[port\_block]$
		\State $edge\_block.outgoing.add(f)$
	\EndFor
	\\
	\Return $port\_block\_to\_edge\_block.values()$
\end{algorithmic}
\end{algorithm}

\begin{theorem}
\label{theorem:assign_semantics_correctness}

Algorithm~\ref{algorithm:assign_semantics} is correct.
\begin{itemize}
	\item Each edge $e \in E$ has a unique semantics.
	\item Each semantics-spanned subgraph 1) is weakly connected and 2) has an absolute dominator candidate.
\end{itemize}

\begin{proof}
To prove the first claim, I restate it formally.
Partition $V$ according to each node's visit state:
\begin{itemize}
	\item $A$: the node has not been visited;
	\item $B$: the node has been visited.
\end{itemize}
Partition $E$ according to the visit states of each edge's tail and head:
\begin{itemize}
	\item $L = \{ (u, v, (q, p)) \,|\, u, v \in A \}$;
	\item $M = \{ (u, v, (q, p)) \,|\, (u, v) \in B \times A \}$;
	\item $N = \{ (u, v, (q, p)) \,|\, u, v \in B \}$.
\end{itemize}
With this notation, the first claim states that after execution $N = E$, and that each edge $e \in N$ has a single color.

The outer loop visits each node exactly once, in topological order.
Thus, each edge $e = (u, v, (q, p))$ is visited exactly twice:
\begin{itemize}
	\item when its tail $u$ is visited, $e$ moves from $L$ to $M$ and is assigned a single color;
	\item when its head $v$ is visited, $e$ moves from $M$ to $N$ and may be assigned a new single color.
\end{itemize}
Once an edge is in $N$, it has a single color and cannot be visited again, so the color persists until termination.
Since every $e \in E$ is visited exactly twice, after execution $N = E$.
This proves the first claim.

To prove the second claim, I proceed by induction.
Let
\[
	\mathcal{E}(v) = \{ E_{1}(v), \dots, E_{J(v)}(v) \}
\]
be a partition of $E(v) = E^{-}(v) \cup E^{+}(v)$ into non-empty sets.
Let $E^{-}_{j}(v) := E^{-}(v) \cap E_{j}(v)$ and $E^{+}_{j}(v) := E^{+}(v) \cap E_{j}(v)$; also, let $V^{-}_{j}(v) := \{ u \,|\, (u, v, (q, p)) \in E^{-}_{j}(v) \}$ and $V^{+}_{j}(v) := \{ w \,|\, (v, w, (q, p)) \in E^{+}_{j}(v) \}$.
Each $E_{j}(v)$ falls into one of five cases:
\begin{enumerate}
	\item[1.] $E^{-}_{j}(v) = \emptyset$, $E^{+}_{j}(v) \neq \emptyset$ is colored with a new color;
	\item[2.] $E^{-}_{j}(v) \neq \emptyset$ is recolored, $E^{+}_{j}(v) \neq \emptyset$ is colored with the (new) upstream color;
	\item[3.] $E^{-}_{j}(v) \neq \emptyset$ is recolored, $E^{+}_{j}(v) = \emptyset$;
	\item[4.] $E^{-}_{j}(v) \neq \emptyset$ is not recolored, $E^{+}_{j}(v) \neq \emptyset$ is colored with the upstream color;
	\item[5.] $E^{-}_{j}(v) \neq \emptyset$ is not recolored, $E^{+}_{j}(v) = \emptyset$.
\end{enumerate}

Cases 1, 2, and 3 fall into the base step.
The edges in $E_{j}(v)$ are painted with a fresh color that is shared by no other visited edge $e \in E$; thus, $G_{j}$ is the subgraph spanned by $E_{j}(v)$.
All edges of $E_{j}(v)$ are incident to $v$, so all predecessors and successors are pairwise weakly connected through $v$ (first clause); every entry-exit path of the star-shaped $G_{j}$ passes through $v$ (second clause).
Note that recoloring edges does not break weak connectedness of upstream subgraphs, since recolored edges are always terminal \textemdash never internal.

Cases 4 and 5 fall into the inductive step.
Let $G_{j}$ be the subgraph spanned by the edges with the same color as $E_{j}(v)$ before visiting $v$, and let $G_{j}' \supseteq G_{j}$ be its supergraph obtained by inserting the edges in $E^{+}_{j}(v)$ while visiting $v$.
\begin{enumerate}
	\item[4.] $G_{j}' \neq G_{j}$. By the inductive hypothesis, $G_{j}$ is weakly connected; since every edge $e \in E^{+}_{j}(v)$ is incident to $v$ and $v \in V_{j}$, $G_{j}'$ is also weakly connected (first clause).
		By the inductive hypothesis, given the set $\Pi$ of all entry-exit paths in $G_{j}$, there is an absolute dominator candidate $v^{*}_{j}$ lying on each $\pi \in \Pi$; partition $\Pi$ into $\Pi(v) := \{ (u, \dots, \hat{v}) \in \Pi \,|\, \hat{v} = v \}$ and $\Pi \setminus \Pi(v)$: $v^{*}_{j}$ still lies on every $\pi \in \Pi \setminus \Pi(v)$, but also on every one-edge extension $\pi' \in \Pi'(v) := \{ (u, \dots, v, w) \,|\, ((u, \dots, v) \in \Pi(v)) \land (w \in V^{+}_{j}(v)) \}$; this proves the existence of an absolute dominator in $G_{j}'$ (second clause).
	\item[5.] $G_{j}' = G_{j}$. By the inductive hypothesis, $G_{j}$ is weakly connected (first clause) and has an absolute dominator candidate (second clause).
\end{enumerate}

To conclude the proof, I must map each statement in the algorithm to one of the five cases.
\begin{itemize}
	\item Callee Algorithm~\ref{algorithm:assign_semantics_mark_input_codec} can induce either of two partitions: $\mathcal{E}(v) = \{ E_{1}(v) = E^{-}(v), E_{2}(v) = E^{+}(v) \}$ (cases 1 and 5) or $\mathcal{E}(v) = \{ E_{1}(v) = E(v) \}$ (case 4).
	\item Callee Algorithm~\ref{algorithm:assign_semantics_mark_output_codec} can induce only one partition: $\mathcal{E}(v) = \{ E_{1}(v) = E(v) \}$ (case 4).
	\item Callee Algorithm~\ref{algorithm:assign_semantics_non_marker} can induce any partition (cases 1 to 5).
\end{itemize}
This proves the second claim.
\end{proof}

\end{theorem}

\begin{theorem}
\label{theorem:assign_semantics_complexity}

Algorithm~\ref{algorithm:assign_semantics} terminates.
Its time complexity is $\mathcal{O}(|V| + |E|)$.

\begin{proof}
The topological sort does work $\mathcal{O}(|V| + |E|)$.
The loop executes $|V|$ iterations \textemdash one for each $v \in V$.
Each iteration does constant work ($\mathcal{O}(1)$), then invokes exactly one of three callees, meaning its time complexity is bounded by the maximum of the callees' time complexities.
\begin{itemize}
	\item Algorithm~\ref{algorithm:assign_semantics_mark_input_codec} does work $\mathcal{O}(|E(v)|)$;
	\item Algorithm~\ref{algorithm:assign_semantics_mark_output_codec} does work $\mathcal{O}(|E(v)|)$;
	\item Algorithm~\ref{algorithm:assign_semantics_non_marker} first partitions $E(v)$ into $\mathcal{E}(v) = \{ E_{1}(v), \dots, E_{J(v)}(v) \}$ (Algorithm~\ref{algorithm:compute_edge_blocks}) doing work $\mathcal{O}(|E(v)|)$, then iterates over $\mathcal{E}(v)$ \textemdash one iteration for each $E_{j}(v) \in \mathcal{E}(v)$.
	Each iteration does work $\mathcal{O}(|E_{j}(v)|)$.
	The total work of the callee is bounded by
	\[
		\mathcal{O}(|E(v)|) + \sum_{E_{j}(v) \in \mathcal{E}(v)} \mathcal{O}(|E_{j}(v)|) = \mathcal{O}(|E(v)|) + \mathcal{O}(|E(v)|) = \mathcal{O}(|E(v)|) \,.
	\]
\end{itemize}
The total work is bounded by
\begin{align*}
	\mathcal{O}(|V| + |E|) + \sum_{v \in V} \left( \mathcal{O}(1) + \mathcal{O}(|E(v)|) \right)
	&= \mathcal{O}(|V| + |E|) + \sum_{v \in V} \mathcal{O}(1) + \sum_{v \in V} \mathcal{O}(|E(v)|) \\
	&= \mathcal{O}(|V| + |E|) + \mathcal{O}(|V|) + \mathcal{O}(|E|) \\
	&= \mathcal{O}(|V| + |E|) \,.
\end{align*}
\end{proof}

\end{theorem}

\subsection*{Induce subgraphs}

\begin{algorithm}
\caption{$G \gets induce\_subgraphs(G)$}
\label{algorithm:induce_subgraphs}
\begin{algorithmic}[1]
	\State $semantics\_to\_subgraph \gets dict()$
	\For{$v \in G.nodes$}
		\For{$e \in G.in\_edges(v)$}
			\State $(u, v, (q, p)) \gets e$
			\State $\mathbf{e} \gets e.semantics$
			\If{$\neg(\mathbf{e} \in semantics\_to\_subgraph.keys())$}
				\State $H \gets Subgraph($
					\Statex \hskip 6em $semantics \gets \mathbf{e},$
					\Statex \hskip 6em $nodes \gets \emptyset,$
					\Statex \hskip 6em $edges \gets \emptyset,$
					\Statex \hskip 6em $entries \gets \emptyset,$
					\Statex \hskip 6em $exits \gets \emptyset,$
					\Statex \hskip 6em $dominators \gets \emptyset,$
					\Statex \hskip 6em $entrying \gets \emptyset,$
					\Statex \hskip 6em $exiting \gets \emptyset,$
					\Statex \hskip 6em $dominating \gets \emptyset,$
				\Statex \hskip 4.5em $)$
				\State $semantics\_to\_subgraph[\mathbf{e}] \gets H$
			\EndIf
			\State $H \gets semantics\_to\_subgraph[\mathbf{e}]$
			\State $H.add\_node(u)$
			\State $H.add\_node(v)$
			\State $H.add\_edge(u, v, (q, p))$
		\EndFor
	\EndFor
	\For{$H \in semantics\_to\_subgraph.values()$}
		\State $H \gets analyze\_subgraph(H)$
	\EndFor
	\For{$H \in semantics\_to\_subgraph.values()$}
		\For{$e \in H.edges$}
			\State $e.subgraph \gets H$
		\EndFor
	\EndFor
	\\
	\Return $G$
\end{algorithmic}
\end{algorithm}

\begin{algorithm}
\caption{$H \gets analyze\_subgraph(H)$}
\label{algorithm:analyze_subgraph}
\begin{algorithmic}[1]
	\For{$v \in H.nodes$}
		\If{$H.in\_degree(v) = 0$}
			\State $O \gets copy(H.out\_edges(v))$
			\State $H.entries.add(v)$
			\State $H.entrying \gets H.entrying \cup O$
		\EndIf
	\EndFor
	\For{$v \in H.nodes$}
		\If{$H.out\_degree(v) = 0$}
			\State $I \gets copy(H.in\_edges(v))$
			\State $H.exits.add(v)$
			\State $H.exiting \gets H.exiting \cup I$
		\EndIf
	\EndFor
	\State $V \gets topological\_sort(H)$
	\If{$len(H.entries) = 1$}
		\State $H.dominators \gets copy(H.entries)$
		\State $H.dominating \gets \emptyset$
	\EndIf
	\State $F \gets H.entrying$
	\For{$v \in V$}
		\State $I \gets copy(H.in\_edges(v))$
		\State $O \gets copy(H.out\_edges(v))$
		\If{$I \supseteq F$}
			\State $H.dominators \gets \{ v \}$
			\State $H.dominating \gets I$
			\State $F \gets O$
		\ElsIf{$O \neq \emptyset$}
			\State $F \gets F \setminus I$
			\State $F \gets F \cup O$
		\EndIf
	\EndFor
	\\
	\Return $H$
\end{algorithmic}
\end{algorithm}

\begin{theorem}
\label{theorem:induce_subgraphs_correctness}

Algorithm~\ref{algorithm:induce_subgraphs} is correct.
\begin{itemize}
	\item $G$ is edge-partitioned into $\mathcal{G} = \{ G_{1}, \dots, G_{K} \}$.
	\item Each subgraph $G_{k}$ 1) is weakly connected and 2) has a unique absolute dominator $v^{*}_{k}$.
\end{itemize}

\begin{proof}
The first claim follows from the first claim of Theorem~\ref{theorem:assign_semantics_correctness}.

The first clause of the second claim follows from the second claim of Theorem~\ref{theorem:assign_semantics_correctness}.
The second clause of the second claim of Theorem~\ref{theorem:assign_semantics_correctness} guarantees existence of an absolute dominator for each subgraph; thus, to prove the second clause I must prove uniqueness.
Consider a subgraph $G_{k}$.
$G$ is acyclic and $G_{k} \subseteq G$, so $G_{k}$ is acyclic and has a topological ordering.
Let $V^{*}_{k}$ be the set of nodes lying on each entry-exit path in $G_{k}$; any two nodes $v^{*}, w^{*} \in V^{*}_{k}$ are comparable along any entry-exit path and consistently ordered by every topological ordering, so the items in $V^{*}_{k}$ form a chain.
Each chain is a totally ordered set, and since it is finite it has a unique maximal element: the absolute dominator $v^{*}_{k}$.
This proves the second clause of the second claim.

The absolute dominator $v^{*}_{k}$ of $G_{k}$ is identified by traversing $V_{k}$ in topological order while updating a frontier $F_{k} \subseteq E_{k}$.
$F_{k}$ is initialized as $F_{k} = \{ (u, v, (q, p)) \in E_{k} \,|\, \delta^{-}_{k}(u) = 0 \}$.
Consider the visit of a node $v \in V_{k}$ such that $\delta^{-}_{k}(v) > 0$.
At the start of the visit, $F_{k}$ contains the terminal edges of every path that is incident to any node $u$ appearing before $v$ in the topological ordering.
Also, the invariant $F_{k} \supseteq E^{-}_{k}(v)$ holds.
The update of $F_{k}$ falls into one of three cases:
\begin{itemize}
	\item if $F_{k} \subseteq E^{-}_{k}(v)$, all tracked paths meet at $v$, so $v$ is an absolute dominator candidate; the frontier is emptied and replaced with $E^{+}_{k}(v)$;
	\item if $F_{k} \supset E^{-}_{k}(v)$ and $E^{+}_{k}(v) \neq \emptyset$, the edges in $E^{-}_{k}(v)$ are replaced by those in $E^{+}_{k}(v)$;
	\item if $F_{k} \supset E^{-}_{k}(v)$ and $E^{+}_{k}(v) = \emptyset$, then the edges in $E^{-}_{k}(v)$ cannot exit $F_{k}$ and no node $w$ appearing after $v$ in the topological ordering can be an absolute dominator candidate.
\end{itemize}
Since candidates are discovered in topological order, the last recorded candidate is the maximal one.
\end{proof}

\end{theorem}

\begin{theorem}
\label{theorem:induce_subgraphs_size}

Algorithm~\ref{algorithm:induce_subgraphs} produces an edge-partition $\mathcal{G} = \{ G_{1}, \dots, G_{K} \}$ of $G$ with the following properties:
\begin{itemize}
	\item each $G_{k} \in \mathcal{G}$ is acyclic;
	\item $\sum_{G_{k} \in \mathcal{G}} |V_{k}| \leq 2|E|$ and $\sum_{G_{k} \in \mathcal{G}} |E_{k}| = |E|$ \textemdash i.e., $\mathcal{O}(|\cup_{G_{k} \in \mathcal{G}} G_{k}|) = \mathcal{O}(|G|)$.
\end{itemize}

\begin{proof}
Since $G$ is acyclic and $G_{k} \subseteq G$, $G_{k}$ is acyclic; this holds for each $k = 1, \dots, K$.
This proves the first claim.

To prove the second claim, it suffices to count.
Note that each node $v \in V$ belongs to at most $\delta^{-}(v) + \delta^{+}(v)$ subgraphs; the total number of nodes across the subgraphs $G_{k}$ is thus bounded by $\sum_{v \in V} (\delta^{-}(v) + \delta^{+}(v)) = 2|E|$.
Each edge $e \in E$ belongs to exactly one subgraph; the total number of edges across the subgraphs $G_{k}$ is thus $\sum_{G_{k} \in \mathcal{G}} |E_{k}| = |E|$.
This proves the second claim.
\end{proof}

\end{theorem}

\begin{theorem}
\label{theorem:induce_subgraphs_complexity}

Algorithm~\ref{algorithm:induce_subgraphs} terminates.
Its time complexity is $\mathcal{O}(|V| + |E|)$.

\begin{proof}
The first loop (subgraph creation step) executes $|V|$ iterations \textemdash one for each $v \in V$.
Each iteration does some constant work ($\mathcal{O}(1)$) before its inner loop executes $\delta^{-}(v)$ iterations \textemdash one for each in-edge to $v$.
Each iteration does constant work ($\mathcal{O}(1)$).
The total work of the subgraph creation step is
\[
	\sum_{v \in V} \left( \mathcal{O}(1) + \sum_{e \in E^{-}(v)} \mathcal{O}(1) \right) = \sum_{v \in V} \mathcal{O}(1) + \sum_{v \in V} \mathcal{O}(|E^{-}(v)|) = \mathcal{O}(|V|) + \mathcal{O}(|E|) = \mathcal{O}(|V| + |E|) \,.
\]

The second loop (subgraph analysis step) executes $|\mathcal{G}|$ iterations \textemdash one for each $G_{k}$.
Each iteration does constant work ($\mathcal{O}(1)$), then invokes Algorithm~\ref{algorithm:analyze_subgraph}, which decomposes into four steps.
\begin{itemize}
	\item The first loop does work bounded by $\sum_{v \in V_{k}} \left( \mathcal{O}(1) + \mathcal{O}(|E^{+}_{k}(v)|) \right) = \mathcal{O}(|V_{k}|) + \mathcal{O}(|E_{k}|) = \mathcal{O}(|V_{k}| + |E_{k}|)$.
	\item The second loop does work bounded by $\sum_{v \in V_{k}} \left( \mathcal{O}(1) + \mathcal{O}(|E^{-}_{k}(v)|) \right) = \mathcal{O}(|V_{k}|) + \mathcal{O}(|E_{k}|) = \mathcal{O}(|V_{k}| + |E_{k}|)$.
	\item The topological sort step does work $\mathcal{O}(|V_{k}| + |E_{k}|)$.
	\item The third loop (absolute dominator search step) does work $\sum_{v \in V_{k}} \left( \mathcal{O}(1) + \mathcal{O}(|E^{-}_{k}(v)|) + \mathcal{O}(|E^{+}_{k}(v)|) \right) = \sum_{v \in V_{k}} \mathcal{O}(1) + \sum_{v \in V_{k}} \mathcal{O}(|E^{-}_{k}(v)|) + \sum_{v \in V_{k}} \mathcal{O}(|E^{+}_{k}(v)|) = \mathcal{O}(|V_{k}|) + \mathcal{O}(|E_{k}|) + \mathcal{O}(|E_{k}|) = \mathcal{O}(|V_{k}| + |E_{k}|)$.
\end{itemize}
The total work of the subgraph analysis step is bounded by
\begin{align*}
	\lefteqn{
		\sum_{G_{k} \in \mathcal{G}} \left( \mathcal{O}(1) + \mathcal{O}(|V_{k}| + |E_{k}|) + \mathcal{O}(|V_{k}| + |E_{k}|) + \mathcal{O}(|V_{k}| + |E_{k}|) + \mathcal{O}(|V_{k}| + |E_{k}|) \right)
	} \\
	&= \sum_{G_{k} \in \mathcal{G}} \mathcal{O}(1) + \sum_{G_{k} \in \mathcal{G}} \mathcal{O}(|V_{k}| + |E_{k}|) + \sum_{G_{k} \in \mathcal{G}} \mathcal{O}(|V_{k}| + |E_{k}|) + \sum_{G_{k} \in \mathcal{G}} \mathcal{O}(|V_{k}| + |E_{k}|) + \sum_{G_{k} \in \mathcal{G}} \mathcal{O}(|V_{k}| + |E_{k}|) \\
	&= \mathcal{O}(|E|) + \mathcal{O}(|V| + |E|) + \mathcal{O}(|V| + |E|) + \mathcal{O}(|V| + |E|) + \mathcal{O}(|V| + |E|) \\
	&= \mathcal{O}(|V| + |E|) \,,
\end{align*}
where the second-to-last step follows from the fact that $|\mathcal{G}| \leq |E|$ and Theorem~\ref{theorem:induce_subgraphs_size}.

The third loop (attribute assignment step) executes $|\mathcal{G}|$ iterations \textemdash one for each $G_{k}$.
Each iteration does some constant work ($\mathcal{O}(1)$) before the inner loop executes $|E_{k}|$ iterations \textemdash one for each $e \in E_{k}$.
Each iteration does constant work ($\mathcal{O}(1)$).
The total work of the attribute assignment step is bounded by $\sum_{G_{k} \in \mathcal{G}} \left( \mathcal{O}(1) + \sum_{e \in E_{k}} \mathcal{O}(1) \right) = \sum_{G_{k} \in \mathcal{G}} \mathcal{O}(1) + \sum_{G_{k} \in \mathcal{G}} \mathcal{O}(|E_{k}|) = \mathcal{O}(|E|) + \mathcal{O}(|E|) = \mathcal{O}(|E|)$.

The steps are executed sequentially, so their complexities sum to
\[
	\mathcal{O}(|V| + |E|) + \mathcal{O}(|V| + |E|) + \mathcal{O}(|E|) = \mathcal{O}(|V| + |E|) \,.
\]
\end{proof}

\end{theorem}

\subsection*{Coalesce subgraphs}

\begin{algorithm}
\caption{$G \gets coalesce\_subgraphs(G)$}
\label{algorithm:coalesce_subgraphs}
\begin{algorithmic}[1]
	\State $\mathbb{G} \gets create\_meta\_graph(G)$
	\State $\mathbb{V} \gets reversed(topological\_sort(\mathbb{G}))$
	\For{$\mathbb{v} \in \mathbb{V}$}
		\State $S \gets \mathbb{v}.subgraph$
		\State $\mathbb{M} \gets list()$
		\For{$\mathbb{f} \in \mathbb{G}.out\_edges(\mathbb{v})$}
			\State $(\_, \mathbb{w}) \gets \mathbb{f}$
			\State $T \gets \mathbb{w}.subgraph$
			\State $B \gets \mathbb{f}.boundary$
			\If{$is\_compatible(S.semantics, T.semantics)$}
				\If{$len(B) = len(S.exits)$}
					\State $\mathbb{M}.append(\mathbb{w})$
				\EndIf
			\EndIf
		\EndFor
		\If{$len(\mathbb{M}) = 1$}
			\State $\mathbb{w} \gets \mathbb{M}[0]$
			\State $T \gets \mathbb{w}.subgraph$
			\For{$v \in (S.nodes \setminus S.entries)$}
				\If{$v \in T.entries$}
					\State $T.entries \gets T.entries \setminus \{ v \}$
					\State $T.entrying \gets T.entrying \setminus T.out\_edges(v)$
				\EndIf
			\EndFor
			\State $T.entries \gets T.entries \cup (S.entries \setminus T.entries)$
			\State $T.entrying \gets T.entrying \cup (S.entrying \setminus T.entrying)$
			\State $T.nodes \gets T.nodes \cup S.nodes$
			\State $T.edges \gets T.edges \cup S.edges$
			\If{$T.dominating = \emptyset$}
				\State $T.dominating \gets copy(S.exiting)$
			\EndIf
			\For{$e \in S.edges$}
				\State $e.semantics \gets T.semantics$
				\State $e.subgraph \gets T$
			\EndFor
			\For{$\mathbb{e} \in \mathbb{G}.in\_edges(\mathbb{v})$}
				\State $(\mathbb{u}, \mathbb{v}) \gets \mathbb{e}$
				\State $\mathbb{d} \gets (\mathbb{u}, \mathbb{w})$
				\If{$\neg(\mathbb{d} \in \mathbb{E})$}
					\State $\mathbb{G}.add\_edge(\mathbb{u}, \mathbb{w})$
					\State $\mathbb{d}.boundary \gets \emptyset$
				\EndIf
				\State $\mathbb{d}.boundary \gets \mathbb{d}.boundary \cup \mathbb{e}.boundary$
			\EndFor
			\State $\mathbb{G}.remove\_edge(\mathbb{v}, \mathbb{w})$
			\State $\mathbb{G}.remove\_node(\mathbb{v})$
		\EndIf
	\EndFor
	\\
	\Return $G$
\end{algorithmic}
\end{algorithm}

\begin{theorem}
\label{theorem:coalesce_subgraphs_correctness}

Algorithm~\ref{algorithm:coalesce_subgraphs} is correct.
\begin{itemize}
	\item $G$ is edge-partitioned into $\mathcal{G}' = \{ G_{1}, \dots, G_{K'} \}$.
	\item Each subgraph $G_{k}$ 1) is weakly connected and 2) has a unique absolute dominator $v^{*}_{k}$.
\end{itemize}

\begin{proof}
It suffices to prove that the claims hold after a single merge.

Let $G_{k, j} := G_{k} \cup G_{j}$ be the merged subgraph.
Each edge $e \in E$ belongs either to some $E_{i}$ with $i \notin \{ k, j \}$ or to $E_{k, j} := E_{k} \cup E_{j}$.
This proves the first claim.

$G_{k}$ and $G_{j}$ are weakly connected and share at least one node: since they are merged, $\emptyset \neq B_{k, j} \subseteq V_{k} \cap V_{j}$.
Thus, $G_{k, j}$ is weakly connected.
This proves the first clause of the second claim.

Let $\Pi_{k}$, $\Pi_{j}$, and $\Pi_{k, j}$ be the sets of all entry-exit paths of $G_{k}$, $G_{j}$, and $G_{k, j}$, respectively.
Let $V^{*}_{k, j}$ be the set of nodes lying on every path in $\Pi_{k, j}$.
Note that $B_{k, j} \subseteq G_{j}.entries$ and $G_{k, j}.exits = G_{j}.exits$.
Any node $v \in V_{k}$ lies on some path in $\Pi_{k}$, each such path ends at a node in $B_{k, j}$, and each entry-exit path in $\Pi_{j}$ traverses $v^{*}_{j}$; thus, any node $v \in V_{k}$ is an ancestor of $v^{*}_{j}$.
Any path $(u, \dots, v, \dots, w) \in \Pi_{k, j}$ is either a path in $\Pi_{j}$ or traverses a node $v \in V_{k} \cap V_{j}$ (which the previous statement guarantees to be an ancestor of $v^{*}_{j}$) such that $(v, \dots, w)$ is the suffix of a path in $\Pi_{j}$, in both cases traversing $v^{*}_{j}$; thus, $v^{*}_{j} \in V^{*}_{k, j}$.
Conversely, any path in $\Pi_{j}$ is either in $\Pi_{k, j}$ or can be prefixed so that the extension is in $\Pi_{k, j}$; thus, $V^{*}_{k, j}$ cannot contain any strict descendant of $v^{*}_{j}$.
This proves the second clause of the second claim.
\end{proof}

\end{theorem}

\begin{theorem}
\label{theorem:coalesce_subgraphs_complexity}

Algorithm~\ref{algorithm:coalesce_subgraphs} terminates.
Its time complexity is $\mathcal{O}(|V| + |E|)$.

\begin{proof}
By Theorem~\ref{theorem:create_meta_graph_complexity}, the first step has time complexity $\mathcal{O}(|V| + |E|)$.

By Theorem~\ref{theorem:create_meta_graph_size}, the topological sort of $\mathbb{G}$ has time complexity $\mathcal{O}(|E|)$.

The outer loop (meta-graph traversal) executes at most $|E|$ iterations \textemdash one for each $\mathbb{v}_{k} \in \mathbb{V}$.
Each iteration does some constant work ($\mathcal{O}(1)$) before proceeding to the search and merge steps; this work sums to $\mathcal{O}(|\mathbb{V}|) = \mathcal{O}(|E|)$ across the meta-graph traversal.

The first inner loop (search step) performs $\Delta^{+}(\mathbb{v}_{k})$ iterations \textemdash one for each meta-edge $\mathbb{e} = (\mathbb{v}_{k}, \mathbb{v}_{j}) \in \mathbb{E}^{+}(\mathbb{v}_{k})$.
Each iteration does constant work ($\mathcal{O}(1)$) (since $B \subseteq G_{k}.exits$, $|B| = |G_{k}.exits|$ is equivalent to $B = G_{k}.exits$).
The total work of the search step across the meta-graph traversal is bounded by
\begin{align*}
	\sum_{\mathbb{v}_{k} \in \mathbb{V}} \left( \mathcal{O}(1) + \sum_{(\mathbb{v}_{k}, \mathbb{v}_{j}) \in \mathbb{E}^{+}(\mathbb{v}_{k})} \mathcal{O}(1) \right)
	&= \sum_{\mathbb{v}_{k} \in \mathbb{V}} \mathcal{O}(1) + \sum_{\mathbb{v}_{k} \in \mathbb{V}} \sum_{(\mathbb{v}_{k}, \mathbb{v}_{j}) \in \mathbb{E}^{+}(\mathbb{v}_{k})} \mathcal{O}(1) \\
	&= \mathcal{O}(|\mathbb{V}|) + \mathcal{O}(|\mathbb{E}|) \\
	&= \mathcal{O}(|E|) + \mathcal{O}(|E|) \\
	&= \mathcal{O}(|E|) \,.
\end{align*}

Assume that the merge guard is never hit.
The merge step decomposes into three steps.
\begin{itemize}
	\item The first block (subgraph update step) does work $\mathcal{O}(|V_{k}| + |E_{k}|)$.
	The loop to subtractively update the entry nodes and entrying edges performs $|V_{k}| - |G_{k}.entries|$ iterations \textemdash one for each $v \in (V_{k} \setminus G_{k}.entries)$.
	Note that once a node $v$ is removed from $G_{j}.entries$ or an edge $e$ is removed from $G_{j}.entrying$, they cannot be added again; thus, across all iterations, each node $v \in V_{j}$ and outgoing edge $e \in E^{+}_{j}(v)$ can be removed at most once, bounding the total work by $\mathcal{O}(|V_{j}| + |E_{j}|)$.
	The work to additively update $G_{j}.entries$ and $G_{j}.entrying$ is bounded by $\mathcal{O}(|V_{k}| + |E_{k}|)$.
	The work to update $V_{j}$ and $E_{j}$ is $\mathcal{O}(|V_{k}| + |E_{k}|)$.
	\item The second inner loop (attribute update step) executes $|E_{k}|$ iterations \textemdash one for each $e \in E_{k}$.
	Each iteration does constant work ($\mathcal{O}(1)$), for a total loop work of $\mathcal{O}(|E_{k}|)$.
	\item The third inner loop (meta-graph update) executes $\Delta^{-}(\mathbb{v}_{k})$ iterations \textemdash one for each meta-edge $\mathbb{e} = (\mathbb{v}_{j}, \mathbb{v}_{k}) \in \mathbb{E}^{-}(\mathbb{v}_{k})$.
	Each iteration does constant work ($\mathcal{O}(1)$) before doing work $\mathcal{O}(|B_{j, k}|)$.
	Theorem~\ref{theorem:create_meta_graph_boundary} guarantees that
	\[
		B_{j_{1}, k} \cap B_{j_{2}, k} = \emptyset
	\]
	when $j_{1} \neq j_{2}$, so the total loop work is
	\[
		\sum_{(\mathbb{v}_{j}, \mathbb{v}_{k}) \in \mathbb{E}^{-}(\mathbb{v}_{k})} \left( \mathcal{O}(1) + \mathcal{O}(|B_{j, k}|) \right) = \mathcal{O}(|\mathbb{E}^{-}(\mathbb{v}_{k})|) + \mathcal{O}(|V_{k}|) \,.
	\]
\end{itemize}
The total work of the merge step across the meta-graph traversal is bounded by
\begin{align*}
	\lefteqn{
		\sum_{\mathbb{v}_{k} \in \mathbb{V}} \left( \mathcal{O}(|V_{k}| + |E_{k}|) + \mathcal{O}(|E_{k}|) + \left( \mathcal{O}(|\mathbb{E}^{-}(\mathbb{v}_{k})|) + \mathcal{O}(|V_{k}|) \right) \right)
	} \\
	&= \sum_{\mathbb{v}_{k} \in \mathbb{V}} \mathcal{O}(|V_{k}| + |E_{k}|) + \sum_{\mathbb{v}_{k} \in \mathbb{V}} \mathcal{O}(|E_{k}|) + \left( \sum_{\mathbb{v}_{k} \in \mathbb{V}} \mathcal{O}(|\mathbb{E}^{-}(\mathbb{v}_{k})|) + \sum_{\mathbb{v}_{k} \in \mathbb{V}} \mathcal{O}(|V_{k}|) \right) \\
	&= \mathcal{O}(|E|) + \mathcal{O}(|E|) + \mathcal{O}(|E|) + \mathcal{O}(|E|) \\
	&= \mathcal{O}(|E|) \,.
\end{align*}
Note that no merge increases the out-degree $\Delta^{+}(\mathbb{u})$ of upstream nodes $\mathbb{u} \in \mathbb{V}$: each removed meta-edge is replaced by at most one meta-edge.
Thus, the complexity of the search step is invariant across the meta-graph traversal.

The total work is bounded by the sum of the steps' time complexities:
\[
	\mathcal{O}(|V| + |E|) + \mathcal{O}(|E|) + \mathcal{O}(|E|) + \mathcal{O}(|E|) + \mathcal{O}(|E|) = \mathcal{O}(|V| + |E|) \,.
\]
\end{proof}

\end{theorem}

\begin{algorithm}
\caption{$\mathbb{G} \gets create\_meta\_graph(G)$}
\label{algorithm:create_meta_graph}
\begin{algorithmic}[1]
	\State $\mathbb{G} \gets DiGraph()$
	\For{$v \in G.nodes$}
		\For{$f \in G.out\_edges(v)$}
			\State $\mathbb{v} \gets f.semantics$
			\State $H \gets f.subgraph$
			\State $\mathbb{G}.add\_node(\mathbb{v})$
			\State $\mathbb{v}.subgraph \gets H$
		\EndFor
		\If{$v.operator \in \{ mark\_input\_codec, mark\_output\_codec \}$}
			\State $e \gets G.in\_edges(v)[0]$
			\For{$f \in G.out\_edges(v)$}
				\State $S \gets e.subgraph$
				\State $T \gets f.subgraph$
				\If{$S.semantics \neq T.semantics$}
					\If{$(v \in S.exits) \land (v \in T.entries)$}
						\State $\mathbb{u} \gets S.semantics$
						\State $\mathbb{v} \gets T.semantics$
						\State $\mathbb{e} \gets (\mathbb{u}, \mathbb{v})$
						\If{$\neg(\mathbb{e} \in \mathbb{E})$}
							\State $\mathbb{G}.add\_edge(\mathbb{e})$
							\State $\mathbb{e}.boundary \gets \emptyset$
						\EndIf
						\State $\mathbb{e}.boundary.add(v)$
					\EndIf
				\EndIf
			\EndFor
		\EndIf
	\EndFor
	\\
	\Return $\mathbb{G}$
\end{algorithmic}
\end{algorithm}

\begin{theorem}
\label{theorem:create_meta_graph_size}

Algorithm~\ref{algorithm:create_meta_graph} is correct.
\begin{itemize}
	\item $\mathbb{G}$ is acyclic.
	\item $|\mathbb{V}| \leq |E|$ and $|\mathbb{E}| \leq |E|$ \textemdash i.e., $|\mathbb{G}| = \mathcal{O}(|G|)$.
\end{itemize}

\begin{proof}
To prove the first claim, I show that no node $\mathbb{v}_{k}$ can close a cycle in $\mathbb{G}$.
Let $\mathbb{v}_{k} \in \mathbb{V}$, $G_{k} = \mathbb{v}_{k}.subgraph$, and $v^{*}_{k} \in V_{k}$ be the absolute dominator of $G_{k}$.
Let $\mathbb{A}(\mathbb{v}_{k})$ and $\mathbb{D}(\mathbb{v}_{k})$ be the sets of ancestors and descendants of $\mathbb{v}_{k}$, respectively; by Theorem~\ref{theorem:create_meta_graph_ancestry}, $\mathbb{v}_{k-1} \in \mathbb{A}(\mathbb{v}_{k})$ only if $v^{*}_{k-1}$ is an ancestor of $v^{*}_{k}$, and $\mathbb{v}_{j} \in \mathbb{D}(\mathbb{v}_{k})$ only if $v^{*}_{j}$ is a descendant of $v^{*}_{k}$.
For any $(\mathbb{v}_{k-1}, \mathbb{v}_{j}) \in \mathbb{A}(\mathbb{v}_{k}) \times \mathbb{D}(\mathbb{v}_{k})$, $v^{*}_{k-1}$ is an ancestor of $v^{*}_{k}$, which is an ancestor of $v^{*}_{j}$.
By acyclicity of $G$, there is no path from $v^{*}_{j}$ to $v^{*}_{k-1}$, so $v^{*}_{k}$ cannot close a cycle in $G$.
Thus, $\mathbb{v}_{k}$ cannot close a cycle in $\mathbb{G}$.
This proves the first claim.

The first inner loop adds at most one meta-node for each out-edge $e \in E^{+}(v)$ (different edges can map to the same meta-node): $|\mathbb{V}| \leq \sum_{v \in V} \delta^{+}(v) = |E|$.
The second inner loop adds at most one meta-edge for each out-edge $e \in E^{+}(v)$: $|\mathbb{E}| \leq \sum_{v \in V} \delta^{+}(v) = |E|$.
This proves the second claim.
\end{proof}

\end{theorem}

\begin{theorem}
\label{theorem:create_meta_graph_ancestry}

Let $\mathbb{v}_{k}, \mathbb{v}_{j} \in \mathbb{V}$ be meta-nodes, and $v^{*}_{k}, v^{*}_{j}$ be the absolute dominators of the subgraphs $G_{k}, G_{j}$.
$\mathbb{v}_{k}$ is an ancestor of $\mathbb{v}_{j}$ only if $v^{*}_{k}$ is an ancestor of $v^{*}_{j}$.

\begin{proof}
Since $\mathbb{v}_{k}$ is an ancestor of $\mathbb{v}_{j}$, there is a path $(\mathbb{v}_{k} = \mathbb{v}_{1}, \dots, \mathbb{v}_{I} = \mathbb{v}_{j})$.
For every $i = 2, \dots, I$, there exists $v_{i} \in G_{i-1}.exits \,\cap\, G_{i}.entries$ and a path
\[
	(v^{*}_{i-1}, \dots, v_{i}, \dots, v^{*}_{i})
\]
connecting the absolute dominators of $G_{i-1}$ and $G_{i}$.
Concatenating these paths over all the $i = 2, \dots, I$ yields a path $(v^{*}_{k} = v^{*}_{1}, \dots, v^{*}_{I} = v^{*}_{j})$.
Thus, $v^{*}_{k}$ is an ancestor of $v^{*}_{j}$.
\end{proof}

\end{theorem}

\begin{theorem}
\label{theorem:create_meta_graph_boundary}

The collection $\mathcal{B} = \{ B_{k, j} \,|\, (\mathbb{v}_{k}, \mathbb{v}_{j}) \in \mathbb{E} \}$ is a partition of a subset of $\{ v \in V \,|\, v.operator \in \{ mark\_input\_codec, mark\_output\_codec \} \}$.

\begin{proof}
Only codec markers can belong to boundaries.
Each codec marker has a single in-edge \textemdash so it belongs to a single upstream graph $G_{k}$.
Algorithm~\ref{algorithm:assign_semantics_mark_input_codec} and Algorithm~\ref{algorithm:assign_semantics_mark_output_codec} paint all out-edges of codec markers with the same color \textemdash so each codec marker belongs to a single downstream graph $G_{j}$.
Thus, each codec marker belongs to either no boundary \textemdash $G_{k} = G_{j}$ \textemdash or one boundary \textemdash $G_{k} \neq G_{j}$.
\end{proof}

\end{theorem}

\begin{theorem}
\label{theorem:create_meta_graph_complexity}

Algorithm~\ref{algorithm:create_meta_graph} terminates.
Its time complexity is $\mathcal{O}(|V| + |E|)$.

\begin{proof}
The outer loop executes $|V|$ iterations \textemdash one for each $v \in V$.
The first inner loop executes $\delta^{+}(v)$ iterations \textemdash one for each $e \in E^{+}(v)$.
Each iteration does constant work ($\mathcal{O}(1)$).
Assume the marker guard is never hit.
The second inner loop executes $\delta^{+}(v)$ iterations \textemdash one for each $e \in E^{+}(v)$.
Assume that the color and boundary guards are never hit.
Each iteration does constant work ($\mathcal{O}(1)$).
The total work is bounded by
\begin{align*}
	\sum_{v \in V} \left( \mathcal{O}(1) + \sum_{e \in E^{+}(v)} \mathcal{O}(1) + \sum_{e \in E^{+}(v)} \mathcal{O}(1) \right)
	&= \mathcal{O}(|V|) + \sum_{v \in V} \sum_{e \in E^{+}(v)} \mathcal{O}(1) + \sum_{v \in V} \sum_{e \in E^{+}(v)} \mathcal{O}(1) \\
	&= \mathcal{O}(|V|) + \mathcal{O}(|E|) + \mathcal{O}(|E|) \\
	&= \mathcal{O}(|V| + |E|) \,.
\end{align*}
\end{proof}

\end{theorem}

\clearpage

\section{Evaluation details}
\label{section:evaluation_details}

The algorithm was implemented in PyTorch (2.14) and evaluated on a MacBook M1 Pro.
The latency measurements summarize 512 runs of a model compiled with TensorRT (11.2) and executed on an NVIDIA H200 GPU.

\begin{table}[h]
\centering
\begin{tabular}{ p{0.45\textwidth} p{0.30\textwidth} c c }
\toprule
Operator														&	Port blocks																					& FP32			& INT8								\\
\midrule
{\tiny\url{aten._native_batch_norm_legit_no_training.default}}	& $\varnothing$																					& \textbf{Z}	&									\\
{\tiny\url{aten.add.Tensor}}									& $\varnothing$																					& \textbf{Z}	&									\\
{\tiny\url{aten.cat.default}}									& $\{(\{\mathtt{tensors[*]}\},\{\mathtt{out}\})\}$												& \textbf{Z}	& \textcolor{orange}{\textbf{X}}	\\
{\tiny\url{aten.clone.default}}									& $\{(\{\mathtt{self}\},\{\mathtt{out}\})\}$													& \textbf{Z}	& \textcolor{purple}{\textbf{W}}	\\
{\tiny\url{aten.constant_pad_nd.default}}						& $\{(\{\mathtt{self}\},\{\mathtt{out}\})\}$													& \textbf{Z}	& \textcolor{purple}{\textbf{W}}	\\
{\tiny\url{aten.convolution.default}}							& $\varnothing$																					& \textbf{Z}	& \textcolor{teal}{\textbf{Y}}		\\
{\tiny\url{aten.embedding.default}}								& $\{(\{\mathtt{weight}\},\{\mathtt{out}\})\}$													& \textbf{Z}	&									\\
{\tiny\url{aten.expand.default}}								& $\{(\{\mathtt{self}\},\{\mathtt{out}\})\}$													& \textbf{Z}	& \textcolor{purple}{\textbf{W}}	\\
{\tiny\url{aten.gelu.default}}									& $\varnothing$																					& \textbf{Z}	&									\\
{\tiny\url{aten.layer_norm.default}}							& $\varnothing$																					& \textbf{Z}	&									\\
{\tiny\url{aten.linear.default}}								& $\varnothing$																					& \textbf{Z}	& \textcolor{teal}{\textbf{Y}}		\\
{\tiny\url{aten.max_pool2d_with_indices.default}}				& $\{(\{\mathtt{self}\},\{\mathtt{out[0]}\})\}$													& \textbf{Z}	& \textcolor{purple}{\textbf{W}}	\\
{\tiny\url{aten.mean.dim}}										& $\varnothing$																					& \textbf{Z}	&									\\
{\tiny\url{aten.mul.Tensor}}									& $\varnothing$																					& \textbf{Z}	&									\\
{\tiny\url{aten.permute.default}}								& $\{(\{\mathtt{self}\},\{\mathtt{out}\})\}$													& \textbf{Z}	& \textcolor{purple}{\textbf{W}}	\\
{\tiny\url{aten.relu.default}}									& $\varnothing$																					& \textbf{Z}	&									\\
{\tiny\url{aten.rms_norm.default}}								& $\varnothing$																					& \textbf{Z}	&									\\
{\tiny\url{aten.scaled_dot_product_attention.default}}			& $\varnothing$																					& \textbf{Z}	& \textcolor{teal}{\textbf{Y}}		\\
{\tiny\url{aten.silu.default}}									& $\varnothing$																					& \textbf{Z}	&									\\
{\tiny\url{aten.slice.Tensor}}									& $\{(\{\mathtt{self}\},\{\mathtt{out}\})\}$													& \textbf{Z}	& \textcolor{purple}{\textbf{W}}	\\
{\tiny\url{aten.split_with_sizes.default}}						& $\{(\{\mathtt{self}\},\{\mathtt{out[*]}\})\}$													& \textbf{Z}	& \textcolor{orange}{\textbf{X}}	\\
{\tiny\url{aten.upsample_nearest2d.vec}}						& $\{(\{\mathtt{input}\},\{\mathtt{out}\})\}$													& \textbf{Z}	& \textcolor{purple}{\textbf{W}}	\\
{\tiny\url{aten.view.default}}									& $\{(\{\mathtt{self}\},\{\mathtt{out}\})\}$													& \textbf{Z}	& \textcolor{purple}{\textbf{W}}	\\
{\tiny\url{xaten.fan_out.Tensor}}								& $\{(\{\mathtt{input}\},\{\mathtt{out}\})\}$													& \textbf{Z}	& \textcolor{orange}{\textbf{X}}	\\
{\tiny\url{xaten.kv_cache.default}}								& {\tiny$\{(\{\mathtt{key}\},\{\mathtt{out[0]}\}),(\{\mathtt{value}\},\{\mathtt{out[1]}\})\}$}	& \textbf{Z}	& \textcolor{orange}{\textbf{X}}	\\
{\tiny\url{xaten.rotary_positional_embeddings.default}}			& $\varnothing$																					& \textbf{Z}	& 									\\
\bottomrule
\end{tabular}
\caption{%
	Commutation table $\mathfrak{C}$ used for evaluation.
	Port blocks (i.e., pairs of input- and output-port sets that must share the same semantics) are shown for each operator: $\varnothing$ indicates that all port blocks are trivial (i.e., they contain only pairs of a singleton and an empty set); \texttt{[*]} denotes a variadic port.
	Letters \textbf{Z}, \textcolor{teal}{\textbf{Y}}, \textcolor{orange}{\textbf{X}}, and \textcolor{purple}{\textbf{W}} indicate the first policy that selects each operator variant.
}
\label{table:operators}
\end{table}

\begin{table}[h]
\centering
\begin{tabular}{ l p{0.42\textwidth} l l }
\toprule
Model			& Repository												& License		& Shape					\\
\midrule
ShuffleNetV2	& {\tiny\url{pytorch/vision}}								& BSD 3-Clause	& $(1, 3, 224, 224)$	\\
U-Net			& {\tiny\url{qubvel-org/segmentation_models.pytorch}}		& MIT			& $(1, 3, 512, 512)$	\\
SigLIP			& {\tiny\url{HuggingFaceTB/SmolVLM2-256M-Video-Instruct}}	& Apache 2.0	& $(1, 3, 512, 512)$	\\
SmolLM2			& {\tiny\url{HuggingFaceTB/SmolVLM2-256M-Video-Instruct}}	& Apache 2.0	& $(1, 128, 576)$		\\
\bottomrule
\end{tabular}
\caption{%
	Model population used for evaluation.
	Models were traced to \texttt{torch.fx.GraphModule} objects using the \texttt{torch.export.export} function; rotary positional embedding (RoPE) and root mean square normalization (RMSNorm) subgraphs were pattern-matched and replaced with standalone operators before hardware adaptation was executed.
}
\label{table:models}
\end{table}

Code is available on \href{https://github.com/axelera-ai/model-compressor}{GitHub}.

\end{document}